\newcommand{\lab}{{\cal L}}
\newcommand{\fnn}{{\sf NNF}}
\newcommand{\mfun}{{\sf m}}
\newcommand{\eqm}{=_{\mfun}}
\newcommand{\eql}{\approx}
\newcommand{\neql}{\not \approx}
\newcommand{\judgement}{{\cal S}}
\newcommand{\dom}{{\sf dom}}
\newcommand{\intcan}{\mathcal{I}_{\sf c}}
\newcommand{\Deltacan}{\Delta^{\intcan}}
\renewcommand{\int}{\mathcal{I}}
\newcommand{\intext}{{\mathcal{I}}_{\mfun}}
\newcommand{\Deltaext}{\Delta^{\intext}}
\newcommand{\unfold}{{\sf set}}
\newcommand{\aboxL}{\mathcal{A}_{\lab}}
\newcommand{\aboxLi}[1]{\mathcal{A}_{#1}}
\newcommand{\eqL}{\eql}
\newcommand{\boundM}{{\sf concepts}(\mathcal{M})}
\newcommand{\equate}[3]{{\sf equate}(#1,#2,#3)}
\newcommand{\makedifferent}[3]{{\sf differenciate}(#1,#2,#3)}
\begin{document}
\begin{frontmatter}              

\title{
Reasoning for ${\mathcal{ALCQ}}$ extended with a flexible  meta-modelling
hierarchy  
}

\author{ Regina Motz \inst{2} \and Edelweis Rohrer\inst{2} \and Paula Severi\inst{1} }

\authorrunning{Motz, Rohrer, Severi}

\institute{Department of Computer Science, University of Leicester, England \\
\email{ps330@leicester.ac.uk} \\
\and
Instituto de Computaci\'on, Facultad de Ingenier\'ia, \\  Universidad de la Rep\'ublica, Uruguay\\
\email{\{rmotz, erohrer\}@fing.edu.uy} 
}

\maketitle

\begin{abstract}
This works is motivated by a real-world case study where it is necessary to
integrate and relate  existing ontologies  through {\em meta-modelling}. 
For this, we introduce the Description Logic $\mathcal{ALCQM}$
which is obtained from   $\mathcal{ALCQ}$  
by adding  statements  that equate  individuals to
concepts in a knowledge base. 
In this new extension, a concept can be an individual of  
another concept  (called {\em meta-concept})
which themselves can be individuals of  yet another concept 
(called {\em meta meta-concept}) and so on. We  define a tableau algorithm for checking  consistency of an ontology in  $\mathcal{ALCQM}$ and prove its correctness.

\keywords{Description Logic, Meta-modelling, Meta-concepts, Well founded sets, 
 Consistency, Decidability }
\end{abstract}

\end{frontmatter}

\section{Introduction}
\label{sec:Introduction}

Our extension of $\mathcal{ALCQ}$ is motivated by a real-world application
on geographic objects  that 
requires to reuse existing ontologies and relate them through meta-modelling
 \cite{bworld}. \\
Figure 1 describes a simplified scenario of this application  
in order to illustrate the meta-modelling relationship. 
It shows two ontologies  separated by a line.
The two ontologies conceptualize the same entities at different levels of 
granularity. 
In the  ontology above the line, rivers and lakes are formalized as  individuals 
while in the one below the line they are concepts.
If we want to integrate these ontologies into a single ontology
(or into an ontology network) 
it is necessary to interpret 
the individual $river$ and the concept $River$ as the same real object.
Similarly for $lake$ and $Lake$.\\
Our solution consists in equating  the individual $river$  to the concept
  $River$ and the individual  $lake$ to the concept $Lake$. 
These equalities are called  \emph{meta-modelling axioms} and in this case, we
say that the ontologies are related through {\em meta-modelling}.
In  Figure \ref{fig:firstView}, 
 meta-modelling axioms are represented by  dashed edges.
After adding the meta-modelling axioms for rivers and lakes, 
the concept  $HydrographicObject$  is now also a {\em meta-concept}
because it is a concept that contains an individual which is also a concept. \\
The kind of meta-modelling we consider in this paper
 can be expressed in OWL Full but it cannot be
expressed in OWL DL.
The fact that it is expressed in OWL Full is not very useful since 
the meta-modelling provided by OWL Full is so expressive that leads to 
undecidability \cite{DBLP:conf/semweb/Motik05}.\\
OWL 2 DL has a very restricted form of meta-modelling 
called {\em punning} where the same identifier can be used as
an individual and as a concept \cite{FOST}. These identifiers are treated
as different objects by the reasoner and
 it is not possible  to detect certain inconsistencies.
 We next illustrate two examples where OWL would not detect inconsistencies
 because the identifiers, though they look syntactically equal,
  are actually  different. 
\begin{example}
 \label{example:five}
If we introduce 
 an axiom expressing that  \emph{HydrographicObject} is a subclass of \emph{River}, 
then  OWL's reasoner will not detect that 
the interpretation of $River$ is not a well founded set
(it is a set that  belongs to itself).
\end{example}
\begin{example}
\label{example:simpletransference}
We add  two axioms, the first one says that
 $river$ and $lake $ as individuals are equal
and the second one says that the classes $River$ and $Lake$ are
disjoint.
 Then 
OWL's reasoner  does not detect that there is a contradiction.
\end{example}
\begin{figure}[H]
\centering
\includegraphics[width=0.5\linewidth]{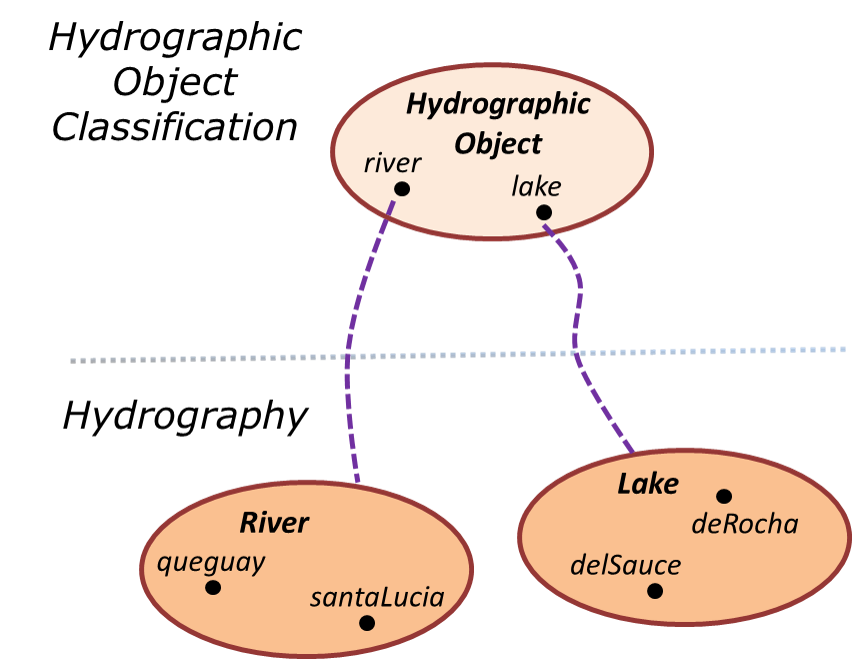}
\caption{Two ontologies  on Hydrography  }
\label{fig:firstView}
\end{figure}
In this paper, we consider  $\mathcal{ALCQ}$ ($\mathcal{ALC}$ with
qualified cardinality restrictions) and extend it
with {\em Mboxes}.  
An Mbox is a set of  equalities of the form
$a \eqm A$ where $a$ is an individual and $A$ is a concept.
In our example, we have that $river \eqm River$ and these two identifiers
are semantically equal, i.e.
the interpretations of the individual $river$ and the concept $River$ are the same.
The domain of an interpretation cannot longer consists of  only  basic objects 
but it must be any well-founded set. 
The well-foundness of our model is not ensured by means of fixing layers 
beforehand as in \cite{DBLP:conf/owled/PanHS05,DBLP:journals/ijsi/JekjantukGP10}
 but it is our reasoner which  checks for circularities.
Our approach allows the user to have any number of levels (or layers) 
(meta-concepts, meta meta-concepts and so on). 
The user does not have to write or know the layer of the concept 
because the reasoner will infer it for him.
In this way, axioms can also naturally mix elements of different layers  
and the user has the flexibility of changing the status of an individual at 
any point without having to make any substantial change to the ontology. \\
We define a tableau  algorithm for checking  consistency of
an ontology in  $\mathcal{ALCQM}$ by adding new rules and a new condition  to
 the tableau algorithm for $\mathcal{ALCQ}$. 
 The new rules  deal with the equalities and inequalities between individuals
 with meta-modelling  which need to be transferred to the level of concepts
 as equalities and inequalities between the corresponding concepts.
 The new condition  deals  with circularities avoiding non well-founded sets.
From the practical point of view, 
extending  tableau for $\mathcal{ALCQ}$
 has the advantage that one can easily  change and reuse the 
code of existing OWL's reasoners.
From the theoretical point of view, we give an elegant proof of
 correctness  by showing an isomorphism between the canonical interpretations
 of $\mathcal{ALCQ}$ and $\mathcal{ALCQM}$.
Instead of re-doing inductive proofs, we ``reuse'' and invoke  the
results of Correctness of the tableau Algorithm for $\mathcal{ALCQ}$ 
from \cite{journals/ai/BaaderBH96} wherever possible.

\paragraph{Related Work.}
\label{sec:relWork}

As we mentioned before, 
OWL 2 DL has a very restricted form of meta-modelling called {\em punning} \cite{FOST}.
In spite of the fact that the same identifier can be used  simultaneously 
as an individual and as a concept, they are  semantically different.
In order to use the punning of OWL 2 DL in the
 example of Figure \ref{fig:firstView},   
 we could change the name $river$ to $River$ and $lake$ to $Lake$.
In spite of the fact that the identifiers look syntactically equal,
OWL would not detect certain inconsistencies as the ones  illustrated
in  Examples \ref{example:five} and \ref{example:simpletransference}, and in  Example 
\ref{example:transferenceequality} which appears in Section \ref{sec:ALCQM}.   
In the first example, OWL won't detect that there is a circularity and
in the other examples, OWL won't detect that there is a contradiction. 
Apart from having the  disadvantage of not detecting certain inconsistencies,
this approach is not natural for reusing  ontologies.
For these scenarios, it is more useful to assume 
the identifiers be syntactically different  and  allow the user
to equate them by using axioms of the form $a \eqm A$. \\ 
Motik proposes a  solution for meta-modelling that is not so expressive
as RDF but which is decidable \cite{DBLP:conf/semweb/Motik05}. 
Since his syntax does not restrict the
 sets of individuals, concepts and roles to be pairwise disjoint,
 an identifier can be used as a concept and  an  individual at the same time.
From the point of view of ontology design,
we consider more natural to assume that the
 identifiers for a concept and individual that conceptualize the same real 
 object (with different granularity) 
will be  syntactically different (because most likely 
they will live  in different ontologies).
In \cite{DBLP:conf/semweb/Motik05}, Motik also  defines  two alternative semantics:
 the context approach and the HiLog approach \cite{DBLP:conf/semweb/Motik05}. 
The context approach is similar to  the so-called punning supported by OWL 2 DL.
The HiLog semantics looks more useful than the context semantics since it can detect
the inconsistency of Example \ref{example:simpletransference}.
However, this semantics ignores the issue on well-founded sets.
Besides, this semantics does not look 
 either intuitive or direct as ours since  
it uses some intermediate extra functions to interpret individuals with meta-modelling.
The algorithm  given in \cite[Theorem 2]{DBLP:conf/semweb/Motik05}
does not check for circularities  (see Example \ref{example:five})
which is one of the main contributions of this paper.
\\ 
De Giacomo et al. specifies a new formalism, ``Higher/Order Description Logics'', 
that allows to treat the same symbol of the signature as an instance, a concept and
 a role \cite{DBLP:conf/aaai/GiacomoLR11}. 
 This approach is similar to punning in the sense that the three new symbols are treated as independent elements. \\ 
Pan et al address meta-modelling by defining different ``layers'' or ``strata'' within a knowledge base \cite{DBLP:conf/owled/PanHS05,DBLP:journals/ijsi/JekjantukGP10}. This approach 
 forces  the user to explicitly write the information of the layer in the concept. This has several disadvantages:  the user should know  beforehand  in which layer the concept lies and it does not  give the flexibility of changing the layer in which it lies. Neither it  allows us to mix different layers when building concepts, inclusions or roles, e.g. we cannot express that the intersection of concepts in two different layers is empty or define a role whose  domain and range live in  different layers.
\\ 
Glimm et al. codify meta-modelling within OWL DL  \cite{DBLP:conf/semweb/GlimmRV10}. 
This codification consists in adding some extra individuals, axioms and roles to the original ontology  in order to represent meta-modelling of concepts. As any codification, this approach has the disadvantage of being involved and difficult to use,  since adding new concepts implies adding a lot of extra axioms.  
This codification is not enough for detecting inconsistencies coming from meta-modelling
(see Example \ref{example:transferenceequality}).
 The approach in  \cite{DBLP:conf/semweb/GlimmRV10} has also other limitations from the point of view of expressibility, e.g. it has only  two levels of meta-modelling (concepts and meta-concepts).

\paragraph{Organization of the paper.}

The remainder of this paper is organized as follows. 
Section \ref{section:casestudy} shows a case study and
explains the advantages of our approach.
Section \ref{sec:ALCQM} defines
 the syntax and semantics of ${\mathcal{ALCQM}}$. 
Section \ref{sec:ReasonerALCQM}  proposes an algorithm for checking consistency. 
 Section \ref{sec:CorrectTableau} proves  its correctness. 
  Finally, Section \ref{sec:relWorks} sets the future work.

\section{Case Study on Geography}
\label{section:casestudy}

In this section, we illustrate some important  advantages of our approach through  
the  real-world example on geographic objects presented in the introduction. \\ 
Figure \ref{fig:secondView} extends the ontology network given in Figure \ref{fig:firstView}.
Ontologies are delimited by light dotted lines. 
Concepts are denoted by ovals and individuals by small filled circles. 
Meta-modelling between ontologies is represented by dashed edges. 
Thinnest arrows denote roles within a single ontology while 
thickest arrows denote roles from one ontology to another ontology. 
\\
Figure \ref{fig:secondView}  has  five separate ontologies. 
The ontology in the uppermost position conceptualizes the politics about geographic objects, defining \emph{GeographicObject} as a meta meta-concept, and \emph{Activity} and \emph{GovernmentOffice} as concepts. 
The ontology  in the left middle describes hydrographic objects 
 through the meta-concept \emph{HydrographicObject}  and the one in the right middle describes flora objects  through the meta-concept \emph{FloraObject}.
The two remaining ontologies  conceptualize 
the concrete natural resources at a lower level of granularity 
through  the concepts 
$River$, $Lake$, $Wetland$ and $NaturalForest$. \\ 
Note that horizontal dotted lines in Figure \ref{fig:secondView} do not represent meta-modelling levels
but just ontologies. The ontology ``Geographic Object Politics'' has the meta meta-concept \emph{GeographicObject}, whose instances are concepts which have also instances being concepts, but we also have the concepts \emph{GovernmentOfice} and \emph{Activity} whose instances conceptualize atomic objects. 
\begin{figure}[H]
\centering
\includegraphics[width=0.9\linewidth]{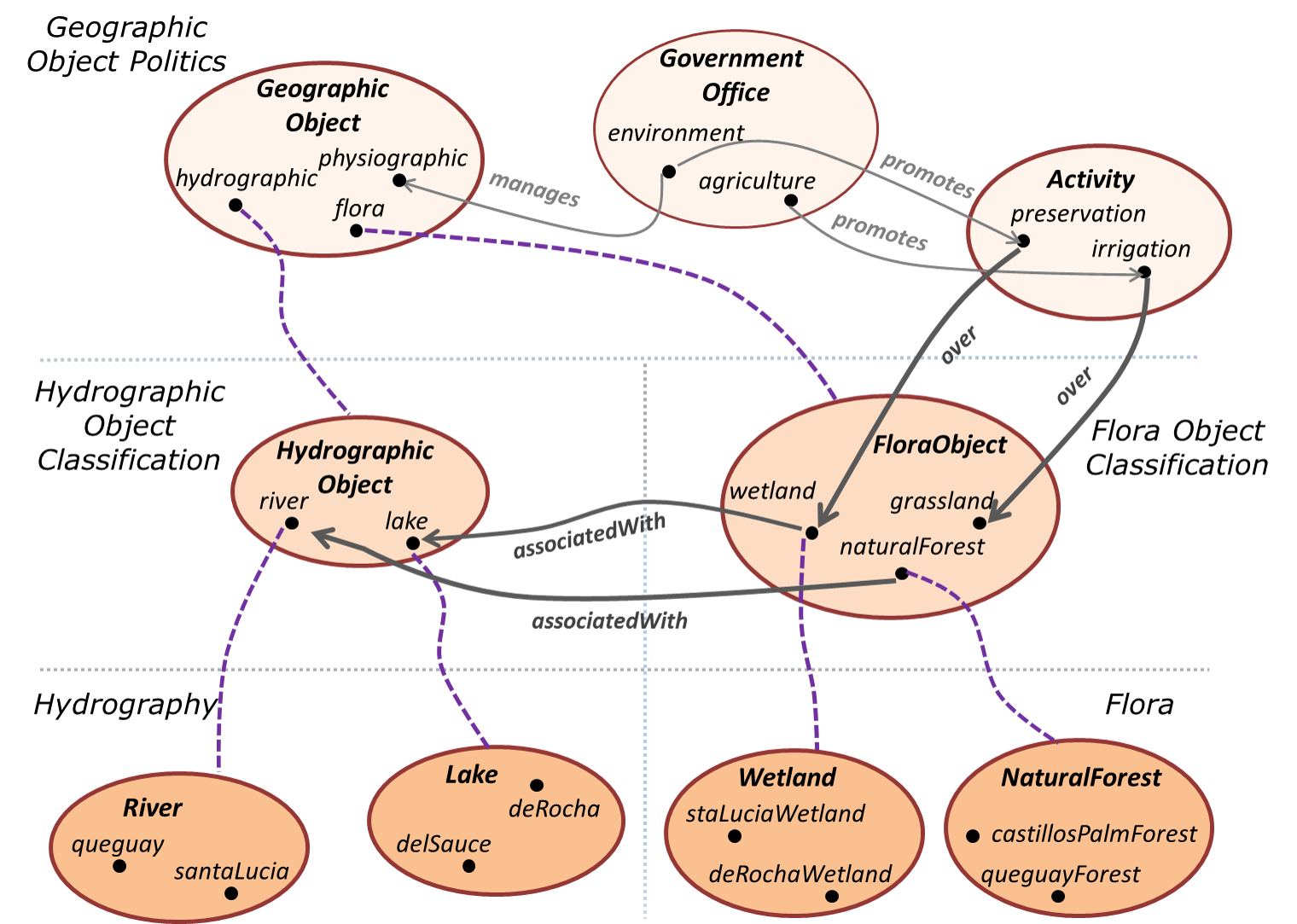}
\caption{Case Study on Geography}
\label{fig:secondView}
\end{figure} 
OWL has only one notion of hierarchy  which classifies concepts
with respect to the  inclusion $\sqsubseteq$. 
Our approach has a new notion of hierarchy, called {\em meta-modelling
hierarchy}, which  classifies concepts
with respect to the membership relation $\in$. 
The meta-modelling hierarchy for the concepts of Figure \ref{fig:secondView}
is depicted in Figure \ref{fig:levelView}.
The concepts are 
 \emph{GovernmentOffice}, \emph{Activity}, 
 \emph{River}, \emph{Lake}, 
 \emph{Wetland} and \emph{NaturalForest}, the meta-concepts are \emph{HydrographicObject} and \emph{FloraObject}, and the meta meta-concept is \emph{GeographicObject}. 
  \\ 
The first advantage of our approach over previous work 
concerns the reuse of ontologies when the same
conceptual object is represented as an individual in one ontology and
as a concept in the other. The identifiers for the individual and the
concept will be syntactically different because they belong to different
ontologies (with different URIs). Then, the ontology engineer
can introduce an equation between these two different identifiers.
This contrasts with   previous approaches where one has to use the same identifier
for an object used as a concept and as an individual.
In Figure \ref{fig:secondView},  $river$ and $River$ represent
the same real object. In order to detect  inconsistency and do
the  proper inferences, one has to be able to equate them. 
\\
The second advantage is about the flexibility of the meta-modelling hierarchy.
This hierarchy is easy to change by just adding equations. 
This is illustrated in the passage from Figure \ref{fig:firstView} to
Figure \ref{fig:secondView}. 
Figure \ref{fig:firstView} has a very simple meta-modelling hierarchy
where the concepts are \emph{River} and  \emph{Lake}
and the  meta-concept is \emph{HydrographicObject}. 
The rather more complex meta-modelling hierarchy for the ontology of 
Figure \ref{fig:secondView}   
(see Figure \ref{fig:levelView}) has been obtained 
 by combining the ontologies 
  of Figure \ref{fig:firstView} with other ontologies and  by simply 
 adding some few meta-modelling axioms.  
%
After adding the meta-modelling equations,
the change of the meta-modelling hierarchy is {\em automatic}
and {\em transparent} to the user.
Concepts such as \emph{GeographicObject} will automatically pass to be 
meta meta-concepts and
roles such as  \emph{associatedWith} 
will automatically pass to be meta-roles, i.e. 
roles between meta-concepts. 
\begin{figure}[H]
\centering
\includegraphics[width=0.7\linewidth]{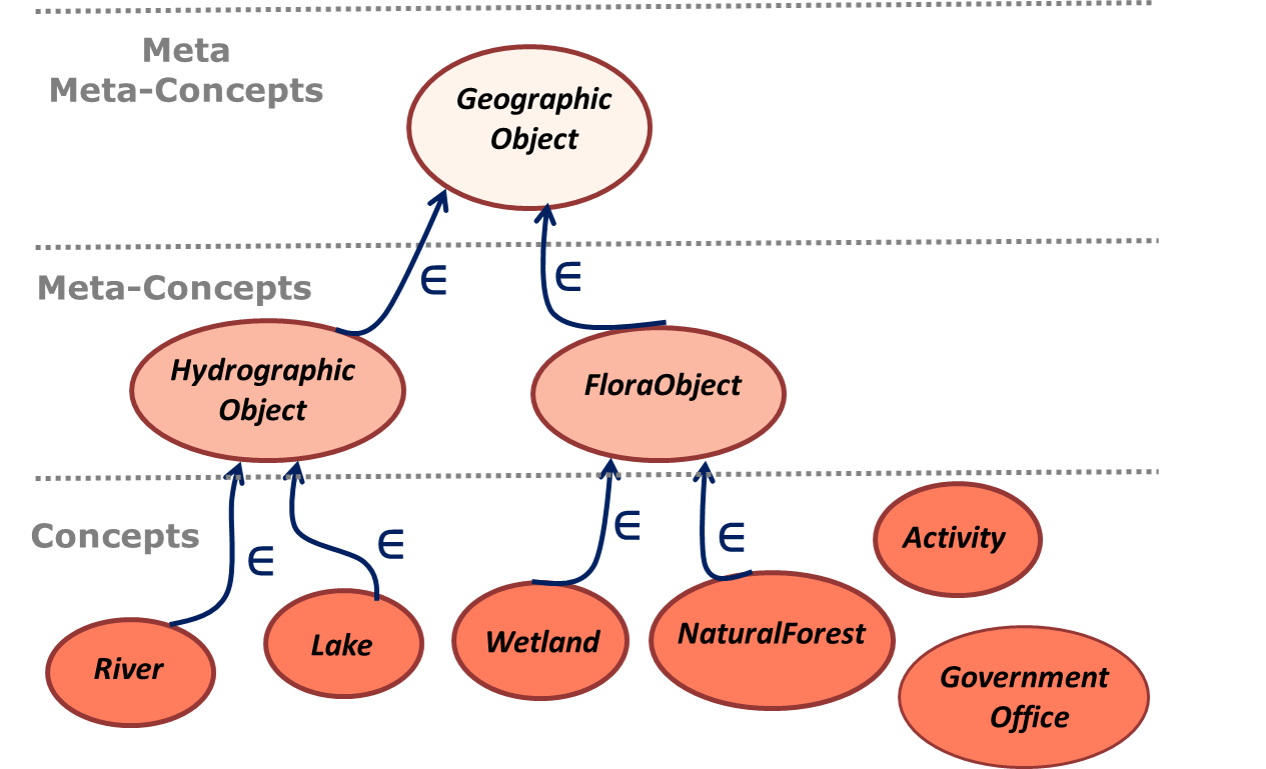}
\caption{Meta-modelling Hierarchy for the Ontology of Figure \ref{fig:secondView}}
\label{fig:levelView}
\end{figure}
The third advantage is that we do not have any restriction on the level of meta-modelling, i.e. we can have concepts, meta-concepts, meta meta-concepts and so on. 
Figure \ref{fig:firstView} has only one level of meta-modelling since there are concepts and meta-concepts. In Figure \ref{fig:secondView}, there are two levels of meta-modelling since
it has concepts, meta-concepts and meta meta-concepts. If we needed, we could extend it further by adding the equation $santaLucia \eqm SantaLucia$ for some concept $SantaLucia$ and this will add a new level in the meta-modelling hierarchy: 
concepts, meta-concepts, meta meta-concepts and meta meta meta-concepts. \\
 Moreover, the user does not have to know the meta-modelling levels, they are transparent for him. 
 Our algorithm  detects  inconsistencies without burdening the user with syntactic complications such as having to explicitly write the level the concept belongs to.\\ 
The fourth advantage is about the possibility of mixing levels
of meta-modelling in the definition of concepts  and roles.   
We can build concepts
using union or intersection between two concepts of different 
levels (layers).  
We can also define  roles whose domain and range live in different levels (or layers).
 For example, in  Figure \ref{fig:secondView}, we have:
1) a role \emph{over} whose  domain 
 is just a concept  while the range is a meta-concept,
2) a role \emph{manages} whose domain is just a concept and whose range
 is a meta meta-concept.
 We  can also add axioms to express
 that some of these concepts, though at different levels of meta-modelling,
 are disjoint, e.g. the intersection of the concept  $\emph{Activity}$ 
 and the meta-concept $\emph{FloraObject}$ is empty.

\section{${\mathcal{ALCQM}}$}
\label{sec:ALCQM}

In this section we introduce the ${\mathcal{ALCQM}}$ Description Logics (DL), with the aim of expressing meta-modelling in a knowledge base. 
The syntax of $\mathcal{ALCQM}$ is obtained from the one of
$\mathcal{ALCQ}$  by adding   new statements 
that allow us to equate individuals with concepts.
The definition of the semantics  for  ${\mathcal{ALCQM}}$ is the key to
our approach. In order to detect inconsistencies coming from meta-modelling,
a proper semantics should give   
 {\em the same interpretation} to individuals and concepts 
which have been equated through meta-modelling.\\
Recall the formal syntax of ${\mathcal{ALCQ}}$ \cite{FOST,DBLP:conf/dlog/2003handbook}.
 We assume a finite set of atomic individuals, concepts and roles. If $A$ is an atomic concept and $R$ is a role, the concept expressions $C$, $D$ are constructed using the following 
 grammar:\\
\begin{tabular}{l}
 $C$, $D$ ::= $A \mid \top \mid \bot  \mid \neg C \mid C \sqcap D \mid C \sqcup D \mid 
 \forall R.C \mid \exists R.C  \mid \geq nR.C \mid \leq nR.C $\
 \end{tabular}\\ 
Recall also that  ${\mathcal{ALCQ}}$-statements are divided in two groups, namely TBox statements and ABox statements, where a TBox contains statements of the form $C \sqsubseteq D$ and an ABox contains statements of the form $C(a)$, $R(a, b)$, $a = b$ or $a \not = b$. \\ 
A {\em meta-modelling axiom} is 
a new type of statement of the form 
\begin{center}
\begin{tabular}{ll}
 $a \eqm A$ & where $a$ is an individual and $A$ is an atomic concept.
 \end{tabular}
 \end{center} 
which we pronounce as {\em $a$ corresponds to $A$ through meta-modelling}.
An {\em Mbox} is a set $\mathcal{M}$ of meta-modelling axioms.
We define ${\mathcal{ALCQM}}$ by keeping the same syntax for concept
expressions as  for  ${\mathcal{ALCQ}}$ and  extending it only to include
MBoxes. 
An ontology or a knowledge base in ${\mathcal{ALCQM}}$ is denoted by 
$\mathcal{O} = (\mathcal{T}, \mathcal{A}, \mathcal{M})$ since it is determined by three sets:
 a Tbox $\mathcal{T}$,  an Abox $\mathcal{A}$ and an Mbox $\mathcal{M}$. The set of all individuals with meta-modelling of an ontology is denoted
by $\dom (\mathcal M)$.\\ 
Figure \ref{figure:boxesforfirstview} 
shows the $\mathcal{ALCQM}$-ontologies of Figure \ref{fig:firstView}.
%
In order to check for cycles in the tableau algorithm, it is convenient 
to have the restriction that $A$ should be a concept name in  $a \eqm A$.
This restriction does not affect us in practice at all. If one would like to
have $a \eqm C$ for a concept expression $C$, it is enough to introduce
a concept name $A$ such that $A \equiv C$ and $a \eqm A$.
\begin{figure}
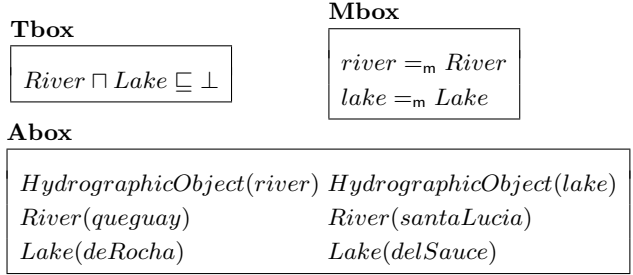

{\small
\begin{center}
\begin{tabular}{l}
\begin{tabular}{lr}
\begin{tabular}{l}
{\bf Tbox} \\
\begin{tabular}{|l|}
\hline   \\[-0.8ex]
\ $River \sqcap Lake \sqsubseteq \bot $ \ \\[0.5ex]
\hline 
\end{tabular}
\end{tabular}
& \ \ \ \ \ \ \ \ \ \ 
\begin{tabular}{l}
{\bf Mbox} \\
\begin{tabular}{|l|}
\hline  \\[-0.8ex]
\ $river \eqm  River$ \ \\[0.5ex]
\ $lake \eqm Lake$ \ \\[0.5ex]
\hline
\end{tabular}
\end{tabular}
\end{tabular}
\\
\begin{tabular}{l}
{\bf Abox} \\
\begin{tabular}{|ll|}
\hline & \\[-0.8ex]
\ $HydrographicObject(river)$ & $HydrographicObject(lake)$ \ \\[0.5ex]
\ $River(queguay)$ & $River(santaLucia)$\ \\[0.5ex]
\ $Lake(deRocha)$ & $Lake(delSauce)$ \ \\[0.5ex]
\hline
\end{tabular}
\end{tabular}
\end{tabular}
\end{center}
}
\caption{The $\mathcal{ALCQM}$-ontology of  Figure \ref{fig:firstView}}
\label{figure:boxesforfirstview} 
\end{figure}
%
%

\begin{definition}[$S_n$ for $n \in \mathbb{N}$]
Given a non empty set $S_{0}$ of atomic objects, we define $S_{n}$ by induction on $\mathbb{N}$ as follows:
$S_{n+1} = S_{n} \cup \mathcal{P}(S_{n}) $
\end{definition}
\noindent
The sets $S_{n}$ are clearly well-founded. Recall from Set Theory that {\em a relation $R$ is well-founded on a class $X$} if every non-empty subset $Y$ of $X$ has a minimal element. Moreover, {\em a set $X$ is well-founded} if the set membership relation is well-founded on the the set $X$.

\begin{definition}[Model of an Ontology in $\mathcal{ALCQM}$]
\label{definition:modelALCQM}
An interpretation ${\mathcal{I}}$ 
is a \emph{model of an ontology 
$\mathcal{O} = (\mathcal{T}, \mathcal{A}, \mathcal{M})$ 
in ${\mathcal{ALCQM}}$ } 
(denoted as $\mathcal{I} \models \mathcal{O}$)
 if the following holds:

\begin{enumerate}

\item the domain 
 $\Delta$ of the interpretation is a subset  of $S_{N}$ 
  for some  $N \in \mathbb{N}$.
 The smallest   $N$ such that $\Delta \subseteq S_N$ is called the 
 \emph{level}  of the interpretation $\mathcal{I}$.

  \item  ${\mathcal{I}}$ is a model of the ontology 
$(\mathcal{T}, \mathcal{A})$   
  in $\mathcal{ALCQ}$.
  
  \item ${\mathcal{I}}$ is a model of $\mathcal{M}$, i.e.
${\mathcal{I}}$   satisfies  each statement in $\mathcal{M}$.
An interpretation $\mathcal{I}$ satisfies the statement $a \eqm A$
if $a^{\mathcal{I}} = A^{\mathcal{I}}$. 
\end{enumerate}
 
\end{definition}

\noindent
Usually, the domain of an interpretation of an ontology is a set of atomic objects. In the first part of Definition \ref{definition:modelALCQM} we redefine the domain $\Delta$
 of the interpretation, so it does not consists  only of  atomic objects any longer. 
The domain $\Delta$ can now contain  sets  since 
 the set $S_N$  is defined recursively using
the power-set operation.
 A similar notion  of  interpretation domain  is defined    
  in \cite[Definition 1]{conf/fois/KaushikFWA06}  for RDF ontologies. 

It is sufficient to require that it is a subset of some $S_{N}$ so it remains well-founded
\footnote{In principle, non well-founded sets are not source of contradictions since we could work on non-well founded Set Theory. 
The reason why we exclude them is because we think that non well-founded sets do not occur in the applications we are interested in. 
}. 
Note that $S_0$ does not have to be the same for all models of an ontology. The second part of Definition \ref{definition:modelALCQM} refers to the $\mathcal{ALCQ}$-ontology without the Mbox axioms. In the third part of the definition, we add another condition that the model must satisfy considering the meta-modelling axioms. This condition restricts the interpretation of an individual that has a corresponding concept through meta-modelling  to be equal to the concept interpretation.

\begin{example}
We define a model for the ontology of Figure \ref{figure:boxesforfirstview} where 
\[ S_0 = \{ queguay, santaLucia, deRocha, delSauce \}\]
Individuals and concepts equated through meta-modelling are semantically equal:
\[
\begin{array}{lll}
river^{\mathcal{I}} & = River^{\mathcal{I}} & = \{queguay, santaLucia\} \\
lake^{\mathcal{I}} & = Lake^{\mathcal{I}}  & = \{deRocha, delSauce\}
\end{array}
\]
\end{example}

\begin{definition}[Consistency of an Ontology in $\mathcal{ALCQM}$]
We say that an ontology  
$\mathcal{O} =(\mathcal{T}, \mathcal{A}, \mathcal{M})$ is \emph{consistent} if there exists a model of $\mathcal{O}$.
\end{definition}
The $\mathcal{ALCQM}$-ontology defined in Figure \ref{figure:boxesforfirstview}
is consistent.

\begin{example}
\label{example:transferenceequality}
We consider the ontology  of Figure \ref{fig:secondView} 
and add the axiom  
\[\begin{array}{ll}
Wetland \equiv NaturalForest 
 \\
\end{array}
\]
 and the fact that  $associatedWith$ is a functional property.
 Note that we have the following axioms in the Abox:
\begin{center}
  \begin{tabular}{l}
  $associatedWith(wetland, lake)$\\ 
  $associatedWith(naturalForest, river)$
  \end{tabular}
\end{center}
As before, the $\mathcal{ALCQ}$-ontology  (without the Mbox)   is consistent.
  However, the $\mathcal{ALCQM}$-ontology  (with the Mbox) is not consistent. 
\end{example}
Example \ref{example:five} illustrates the use of the first clause
of Definition \ref{definition:modelALCQM}. Actually, this example is inconsistent
because the first clause of this definition does not hold. 
Examples \ref{example:simpletransference} and \ref{example:transferenceequality}
  illustrate how the second and third conditions of Definition \ref{definition:modelALCQM} interact.

\begin{definition}[Logical Consequence from an Ontology in $\mathcal{ALCQM}$]
We say that $\judgement$ is a \emph{logical consequence of} 
$\mathcal{O} =(\mathcal{T}, \mathcal{A}, \mathcal{M})$ 
(denoted as $\mathcal{O} \models \judgement$) if all models of $\mathcal{O}$ are also models  
of $\judgement$ where $\judgement$ 
is any of the following  $\mathcal{ALCQM}$-statements, i.e.
 $C \sqsubseteq D$, $C(a)$, $R(a, b)$,  $a \eqm A$, $a = b$   and $a \not= b$.  
\end{definition}
It is  possible to infer new knowledge in the ontology
with the meta-modelling that is not possible without it
as illustrated by Examples 
\ref{example:five}, \ref{example:simpletransference} and \ref{example:transferenceequality}.

%

\begin{definition}[Meta-concept]
\label{definition:metaconcept}
We say that $C$ is a meta concept in $\mathcal{O}$ if there exists an individual $a$ such that $\mathcal{O} \models  C(a)$ and $\mathcal{O} \models a \eqm A$.
\end{definition}

Then, $C$ is a meta meta-concept if there exists an individual $a$ such that 
$\mathcal{O} \models  C(a)$, $\mathcal{O} \models a \eqm A$ and $A$ is a meta-concept.
Note that a meta meta-concept is also a meta-concept.

\noindent
We have some new inference problems:

\begin{enumerate}
\item {\em Meta-modelling}. 
Find out whether $\mathcal{O} \models a \eqm A$ or not.
\item {\em Meta-concept}. Find out whether $C$ is a meta-concept or not.
\end{enumerate}

\noindent
Most inference problems in Description Logic can be reduced to satisfiability by applying 
a standard result in logic which says that a formula 
$\phi$ is a semantic consequence
of a set of formulas $\Gamma$  
if and only if $\Gamma \cup \neg \phi$ is not satisfiable.
The first two problems can be reduced to satisfiability  following this general idea.
For the first problem, note that 
 since  $a \not =_m A$ is not directly available in the
 syntax, we have replaced it by $a \not = b $  and $ b \eqm A$
 which is  an equivalent statement 
to the negation of  $a \eqm A$ 
 and  can be expressed in $\mathcal{ALCQM}$.
 
\begin{lemma}
\label{lemma:firstreasoningproblem}
$\mathcal{O} \models a \eqm A$ if and only if 
for some new individual $b$, $\mathcal{O} \cup \{a \not = b, b \eqm A \}$ is unsatisfiable.
\end{lemma}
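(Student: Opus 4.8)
The plan is to prove both directions of the biconditional directly from the semantic definitions, using the standard "refutation" trick that the paper already mentions for reducing entailment to unsatisfiability, but being careful about the fact that $\not\eqm$ is not in the syntax and has been replaced by the pair $a \neq b$, $b \eqm A$ for a fresh individual $b$.

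For the forward direction, I would argue by contraposition: suppose $\mathcal{O} \cup \{a \neq b, b \eqm A\}$ is satisfiable, witnessed by a model $\mathcal{I}$ (where $b$ is a new individual not occurring in $\mathcal{O}$). Then $\mathcal{I} \models \mathcal{O}$, and moreover $a^{\mathcal{I}} \neq b^{\mathcal{I}}$ while $b^{\mathcal{I}} = A^{\mathcal{I}}$, so $a^{\mathcal{I}} \neq A^{\mathcal{I}}$, i.e. $\mathcal{I} \not\models a \eqm A$. Hence $\mathcal{O} \not\models a \eqm A$. This direction is essentially immediate.

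For the converse, suppose $\mathcal{O} \not\models a \eqm A$. Then there is a model $\mathcal{I}$ of $\mathcal{O}$ with $a^{\mathcal{I}} \neq A^{\mathcal{I}}$. I want to extend $\mathcal{I}$ to a model $\mathcal{I}'$ of $\mathcal{O} \cup \{a \neq b, b \eqm A\}$. Since $b$ does not occur in $\mathcal{O}$, I can set $\mathcal{I}'$ to agree with $\mathcal{I}$ on everything in $\mathcal{O}$'s signature and put $b^{\mathcal{I}'} := A^{\mathcal{I}'} = A^{\mathcal{I}}$. Then $\mathcal{I}' \models \mathcal{O}$ (the interpretation of the fresh individual is irrelevant to $\mathcal{O}$'s axioms), $\mathcal{I}' \models b \eqm A$ by construction, and $\mathcal{I}' \models a \neq b$ because $a^{\mathcal{I}'} = a^{\mathcal{I}} \neq A^{\mathcal{I}} = b^{\mathcal{I}'}$. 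So the extended ontology is satisfiable, which is the contrapositive of what we wanted.

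The only genuinely delicate point — and the one I would state explicitly — is the claim that extending a model by interpreting a brand-new individual name as an already-present domain element still gives a model; in particular one must check that the domain $\Delta^{\mathcal{I}'}$ is still a subset of some $S_N$ (clause 1 of Definition \ref{definition:modelALCQM}) and that clauses 2 and 3 are unaffected. This holds trivially because $\Delta^{\mathcal{I}'} = \Delta^{\mathcal{I}}$ (we are not adding new domain elements, only naming an existing one) and the $\mathcal{ALCQ}$-part and the Mbox of $\mathcal{O}$ do not mention $b$. I would also remark that "new" is essential: if $b$ already occurred in $\mathcal{O}$, reusing it could clash with existing constraints, which is why the lemma quantifies over a fresh $b$. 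Apart from that bookkeeping, both directions are routine manipulations of the definition of $\models$.
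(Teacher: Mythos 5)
Your proof is correct and follows essentially the same route as the paper's: both directions are the standard refutation argument, with the forward direction reading off $a^{\mathcal{I}} \neq A^{\mathcal{I}}$ from any model of the extended ontology, and the converse extending a countermodel of $a \eqm A$ by interpreting the fresh $b$ as $A^{\mathcal{I}}$. The only difference is presentational (you argue by contraposition where the paper argues by contradiction), and your explicit check that the extension does not disturb clause 1 of Definition~\ref{definition:modelALCQM} is a welcome bit of care that the paper leaves implicit.
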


\begin{lemma}
\label{lemma:secondreasoningproblem}
$C$ is a meta-concept
if and only if
for some  individual $a$ we have that
 $\mathcal{O} \cup \{\neg C(a) \}$  is  unsatisfiable
and
for some new individual $b$, $\mathcal{O} \cup \{a \not = b, b \eqm A \}$ is unsatisfiable.
\end{lemma}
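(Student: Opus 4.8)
The plan is to unfold the definition of meta-concept (Definition~\ref{definition:metaconcept}) and reduce each of its two conjuncts to an unsatisfiability statement using the standard ``$\Gamma\models\phi$ iff $\Gamma\cup\{\neg\phi\}$ unsatisfiable'' trick, exactly as was done for Lemma~\ref{lemma:firstreasoningproblem}. Recall that $C$ is a meta-concept iff there exists an individual $a$ with $\mathcal{O}\models C(a)$ \emph{and} $\mathcal{O}\models a\eqm A$ for some atomic concept $A$. So the proof splits into showing that, for a fixed candidate individual $a$: (i) $\mathcal{O}\models C(a)$ iff $\mathcal{O}\cup\{\neg C(a)\}$ is unsatisfiable, and (ii) $\mathcal{O}\models a\eqm A$ (for some $A$) iff, for a new individual $b$, $\mathcal{O}\cup\{a\neq b,\ b\eqm A\}$ is unsatisfiable. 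Part (i) is the plain semantic-consequence-to-satisfiability reduction: $\mathcal{O}\models C(a)$ means every model of $\mathcal{O}$ satisfies $C(a)$, which says precisely that no model of $\mathcal{O}$ satisfies $\neg C(a)$, i.e. $\mathcal{O}\cup\{\neg C(a)\}$ has no model. Part (ii) is exactly the content of Lemma~\ref{lemma:firstreasoningproblem}, which I would simply invoke.

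First I would fix an individual $a$ and an atomic concept $A$, and prove the ``biconditional for this fixed $a$'': $\big(\mathcal{O}\models C(a)$ and $\mathcal{O}\models a\eqm A\big)$ iff $\big(\mathcal{O}\cup\{\neg C(a)\}$ unsatisfiable and, for new $b$, $\mathcal{O}\cup\{a\neq b,\ b\eqm A\}$ unsatisfiable$\big)$. The forward direction: from $\mathcal{O}\models C(a)$ and the observation in part~(i) we get the first unsatisfiability, and from $\mathcal{O}\models a\eqm A$ together with Lemma~\ref{lemma:firstreasoningproblem} we get the second. The backward direction is symmetric: the first unsatisfiability forces every model of $\mathcal{O}$ to satisfy $C(a)$, and Lemma~\ref{lemma:firstreasoningproblem} turns the second unsatisfiability back into $\mathcal{O}\models a\eqm A$. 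Then I would existentially quantify over $a$ (and note that $A$ can be taken to range over the finitely many atomic concept names, so ``for some $a$'' in the lemma statement is the honest existential over individuals, with $A$ implicitly the witness concept): $C$ is a meta-concept iff there is \emph{some} individual $a$ making both conjuncts hold, which by the fixed-$a$ biconditional is iff there is some $a$ making both unsatisfiability conditions hold — which is the statement of the lemma.

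The only subtlety worth flagging is the role of $A$ and of the ``new individual $b$.'' In Definition~\ref{definition:metaconcept} the concept $A$ is existentially bound, but in the lemma statement $A$ appears free; the reading must be that $A$ is the atomic concept witnessing that $a$ has meta-modelling, and since there are only finitely many atomic concepts this causes no circularity. One should also check that introducing the fresh $b$ is harmless on both sides — it does not occur in $\mathcal{O}$ or in $C$, so conservativity of adding a fresh constant (already used implicitly in Lemma~\ref{lemma:firstreasoningproblem}) lets us move freely between the two formulations. I expect the main — and really the only — obstacle is bookkeeping the quantifier over $a$ cleanly and making the implicit dependence of $A$ on $a$ explicit; there is no genuine mathematical difficulty, since everything reduces to Lemma~\ref{lemma:firstreasoningproblem} and the textbook consequence/satisfiability duality.
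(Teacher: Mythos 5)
Your proposal is correct and follows essentially the same route as the paper: unfold Definition~\ref{definition:metaconcept}, reduce $\mathcal{O}\models C(a)$ to unsatisfiability of $\mathcal{O}\cup\{\neg C(a)\}$ by the standard consequence/satisfiability duality, and handle the meta-modelling conjunct by invoking Lemma~\ref{lemma:firstreasoningproblem}. The paper's own proof is just a terser version of this; your extra care about the quantification over $a$ and the free occurrence of $A$ is a reasonable clarification of a point the paper leaves implicit.
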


\section{Checking Consistency of an Ontology in   $\mathcal{ALCQM}$}
\label{sec:ReasonerALCQM}

In this section we will define a tableau 
algorithm for checking consistency of  an ontology
 in $\mathcal{ALCQM}$ by extending 
the tableau algorithm for 
$\mathcal{ALCQ}$. 
From the practical point of view, 
extending  tableau for $\mathcal{ALCQ}$
 has the advantage that one can easily  change and reuse the 
code of existing OWL's reasoners. \\
The tableau algorithm for $\mathcal{ALCQM}$ is defined 
by adding  three  expansion rules and a  condition to the tableau algorithm for
 $\mathcal{ALCQ}$. 
The new  expansion rules  deal  with the equalities and inequalities between individuals
 with meta-modelling  which need to be transferred to the level of concepts
 as equalities and inequalities between the corresponding concepts.
 The new condition  deals  with circularities avoiding sets that belong to
 themselves and more generally, avoiding non well-founded sets.

\begin{definition}[Cycles]
\label{definition:cycle}
We say that the tableau graph $\lab$ has a cycle with respect to $\mathcal{M}$ if 
 there exist a sequence of meta-modelling axioms $ A_0 \eqm a_0$, $ A_1  \eqm a_1 $, $ \ldots$ $ A_n \eqm  a_n$ all in $\mathcal{M}$ such that 
\[\begin{array}{ll}
A_1 \in \lab(x_0) & x_0 \eql a_0\\
A_2 \in \lab(x_1) & x_1 \eql a_1\\
\vdots & \vdots \\
A_n \in \lab(x_{n-1}) \ \ \ & x_{n-1} \eql a_{n-1}\\
A_0 \in \lab(x_{n}) & x_{n} \eql a_{n}
\end{array}
\]  
\end{definition}

\begin{example}
Suppose we have an ontology 
$(\mathcal{T},  \mathcal{A}, \mathcal{M})$  with 
two individuals $a$ and $b$, the  individual assignments:
$B(a)$ and  
$A(b)$;
and the meta-modelling axioms:
\begin{center}
$ a \eqm A$ \ \ \ $b \eqm  B$. 
\end{center}
The tableau graph $\lab(a) = \{B \}$ and $\lab(b) =\{A \}$ has a cycle since 
$A \in \lab(b)$ and $B \in \lab(a)$.
\end{example}

Initialization for the $\mathcal{ALCQM}$-tableau is nearly 
the same as for $\mathcal{ALCQ}$. The nodes of the initial tableau graph
 will be created from individuals that occur in the Abox as well as
 in the Mbox.
After initialization, the tableau algorithm proceeds by non-deterministically applying the 
 {\bfseries{expansion rules}}  for $\mathcal{ALCQM}$.
The expansion rules for $\mathcal{ALCQM}$ are obtained by adding the
  rules of Figure \ref{figure:newrules} to  the expansion rules for $\mathcal{ALCQ}$. 

\begin{figure}
{\small
\begin{tabular}{ll}
{\bf $\eql$-rule}:&
Let $a \eqm A$ and $b \eqm B$ in $\mathcal{M}$. 
If $a \eql b$ 
and $A \sqcup \neg B, B \sqcup \neg A$ does not \\
& belong to $\mathcal{T}$ then ${\cal T} \leftarrow A \sqcup \neg B, B \sqcup \neg A$. \\
{\bf $\neql$-rule}:& 
Let $a \eqm A$ and $b \eqm B$ in $\mathcal{M}$. 
If $a \neql b$ 
and there is no $z$ such that \\
& $A \sqcap \neg B \sqcup B \sqcap \neg A \in \lab(z)$  
then  create a new node $z$ with \\
& $\lab(z) = \{ A \sqcap \neg B \sqcup B \sqcap \neg A \}$. \\
{\bf close-rule}:& 
Let $a \eqm A$ and $b \eqm B$ where
 $a \eql x$, $b \eql y$, $\lab(x)$ and  $\lab(y)$ are  defined.\\
 & If  neither $x \eql  y$ nor $x  \neql y$  are set  
then $\equate{a}{b}{\lab}$ or $\makedifferent{a}{b}{\lab}$ .
\end{tabular}
}
\caption{Additional Expansion Rules for $\mathcal{ALCQM}$}
\label{figure:newrules}
\end{figure}

We explain the intuition  behind the new expansion rules.
If $a \eqm A$ and $b \eqm B$ then the individuals $a$ and $b$ 
represent concepts. Any equality at the level of
individuals should be transferred as an equality between concepts
and similarly with the difference. \\ 
The $\eql$-rule transfers  the equality $a \eql b$  to the level of concepts
by adding  two statements to the Tbox which are
equivalent to $A \equiv B$.
 This rule is necessary to detect the inconsistency of 
Example \ref{example:simpletransference} where the equality $river = lake$
is transferred as an equality $River \equiv Lake$ between concepts.
A particular case of the application of the $\eql$-rule  is
when $a \eqm A$ and $a \eqm B$. In this case, the algorithm also adds
$A \equiv B$.  \\ 
The $\neql$-rule is similar to the $\eql$-rule.
However, in the case that  $a \neql b$, we cannot add
$A \not \equiv B$ because the negation of $\equiv$ is not
directly  available in
the language. So, what we do is to 
 replace it by an equivalent statement, i.e. add an element $z$ that witness 
this difference. \\ 
The rules $\eql$ and $\neql$ are not sufficient to detect all
inconsistencies.
With only these rules, we could not detect the inconsistency of
Example \ref{example:transferenceequality}.
 The idea is that  we also need to
  transfer the equality $A \equiv B$ between concepts as an equality
  $a \eql b$ between individuals.
 However, here we face a delicate problem.
 It is not enough to transfer the equalities
 that are in the Tbox. We also need to transfer 
 the semantic consequences, e.g. $\mathcal{O} \models A \equiv B$.
 Unfortunately, we cannot do $\mathcal{O} \models A \equiv B$.
Otherwise we will be captured in a vicious circle~\footnote{Consistency is
the egg and semantic consequence is the chicken.}  
since the problem of finding out the semantic consequences is
reduced to the one of satisfiability. 
The  solution to this  problem is to explicitly try
either $a \eql b$ or $a \neql b$. This is exactly what the
close-rule does. 
The close-rule adds either $a \eql b$ or $a \neql b$.
It is similar to the choose-rule which adds either $C$ or $\neg C$. 
This works because we are working in Classical Logic
and we have the law of excluded middle. For a model $\int$ of the ontology, we have that either
$a^{\int} = b^{\int}$ or $a^{\int} \not = b^{\int}$
(see also Lemma \ref{lemma:modelstar}). 
Since the tableau algorithm works with canonical representatives of
the $\eql$-equivalence classes, we have to be careful how we 
equate two individuals or make them different. 
\\
Note that the application of the tableau algorithm to an $\mathcal{ALCQM}$ knowledge base $(\mathcal{T},  \mathcal{A}, \mathcal{M})$ changes the Tbox as well as the tableau graph $\lab$.

\begin{definition}[${\cal ALCQM}$-Complete]
$(\mathcal{T}, \lab)$ is ${\cal ALCQM}$-complete if 
none of the expansion rules for ${\cal ALCQM}$
 is applicable.
\end{definition}

The algorithm terminates when we reach some $(\mathcal{T}, \lab)$ 
where  either $(\mathcal{T}, \lab)$ is 
${\cal ALCQM}$-complete, $\lab$ has a contradiction or $\lab$ has a cycle. 
 The ontology $(\mathcal{T}, \mathcal{A}, \mathcal{M})$ is consistent if 
 there exists some 
 ${\cal ALCQM}$-complete $(\mathcal{T},\lab)$ 
 such that $\lab$ has neither contradictions nor cycles. Otherwise it is inconsistent.

\section{Correctness of the Tableau Algorithm for $\mathcal{ALCQM}$}
\label{sec:CorrectTableau}

In this section we prove termination, soundness
and completeness 
 for the tableau algorithm described in the previous section.
We give an elegant proof of
 completeness  by showing an isomorphism between the canonical interpretations
 of $\mathcal{ALCQ}$ and $\mathcal{ALCQM}$.

\begin{theorem}[Termination]
\label{theorem:termination}
The tableau algorithm for $\mathcal{ALCQM}$ described in the previous section
always terminates.
\end{theorem}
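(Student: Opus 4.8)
The plan is to reduce termination of the $\mathcal{ALCQM}$-tableau to the already-established termination of the $\mathcal{ALCQ}$-tableau from \cite{journals/ai/BaaderBH96}. The key observation is that the three new rules ($\eql$-rule, $\neql$-rule, close-rule) can only fire finitely often along any run; once they are exhausted the algorithm behaves exactly like the $\mathcal{ALCQ}$-tableau relative to a fixed, finite Tbox on a finite graph, and the cycle condition of Definition~\ref{definition:cycle} can only make the algorithm stop sooner. So the work splits into (a) bounding the number of applications of the new rules, and (b) invoking $\mathcal{ALCQ}$-termination for the rest.

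First I would bound the new rules. Since $\mathcal{M}$ is finite, the set $\boundM$ of concept names occurring in meta-modelling axioms is finite, and so is $\dom(\mathcal{M})$. The $\eql$-rule only ever inserts into $\mathcal{T}$ concepts of the form $A \sqcup \neg B$ with $A,B \in \boundM$, and never removes Tbox statements; hence $\mathcal{T}$ grows monotonically through subsets of a fixed finite set of candidate axioms and the $\eql$-rule fires at most $O(|\boundM^2|)$ times. The $\neql$-rule creates a fresh node labelled $A \sqcap \neg B \sqcup B \sqcap \neg A$ only when no such node already exists, so it too fires at most $O(|\boundM^2|)$ times. For the close-rule, the individuals with meta-modelling form a subset of the finite set $\dom(\mathcal{M})$; each application either equates two of them (strictly decreasing the number of $\eql$-classes among such individuals) or records a new inequality between two of them, and neither effect is ever undone, nor do the $\mathcal{ALCQ}$-rules ever create new individuals with meta-modelling (they only add anonymous successor nodes). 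Consequently the lexicographic measure (number of pairs in $\boundM^2$ whose $\eql$-axiom is not yet in $\mathcal{T}$; number of pairs in $\boundM^2$ lacking a witnessing $\neql$-node; number of pairs of individuals of $\dom(\mathcal{M})$ whose mutual $\eql$/$\neql$ status is not yet set) strictly decreases at every application of a new rule and is non-increasing under every $\mathcal{ALCQ}$-rule, so only finitely many new-rule applications can occur in any run.

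Finally I would cut a hypothetical infinite run at the (finitely many) applications of new rules. Within each maximal interval between two consecutive new-rule applications, $\mathcal{T}$ is constant (it changes only via the $\eql$-rule) and no new graph roots are added (they come only from the $\neql$-rule), so the only rules acting are the $\mathcal{ALCQ}$-expansion rules relative to a fixed finite Tbox on a graph with finitely many root nodes — the Abox individuals plus the finitely many nodes produced so far by the $\neql$-rule. Since the $\mathcal{ALCQ}$-termination argument of \cite{journals/ai/BaaderBH96} (subset-blocking, a bound on the out-degree, and a bound on the length of role paths, all read off the finite Tbox) does not require starting from a fresh graph, each such interval is finite, and the same bound applies to the suffix after the last new-rule application; hence the whole run is finite. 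The main obstacle is exactly this interaction between the dynamically growing Tbox and the $\mathcal{ALCQ}$-rules — one must rule out that enlarging $\mathcal{T}$ mid-run reactivates $\mathcal{ALCQ}$-expansion forever — which is precisely what the monotonicity of $\mathcal{T}$ (finitely many possible values) together with the interval decomposition is designed to control.
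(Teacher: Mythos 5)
Your proof is correct and follows essentially the same route as the paper: bound the applications of the $\eql$-, $\neql$- and close-rules by the finiteness of $\boundM$ and $\dom(\mathcal{M})$ together with monotonicity, then invoke $\mathcal{ALCQ}$-termination from \cite{journals/ai/BaaderBH96} for the remaining segments in which the Tbox is fixed. Your interval decomposition is a minor variation on the paper's argument, which simply applies $\mathcal{ALCQ}$-termination to the infinite suffix after the last new-rule application.
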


\begin{proof}
Suppose the input is an arbitrary ontology 
$\mathcal{O}  = (\mathcal{T}, \mathcal{A}, \mathcal{M})$.
We define
\[
\begin{array}{ll}
\boundM &  = \bigcup_{a \eqm A, b \eqm B}  
 \{ A \sqcap \neg B \sqcup B \sqcap \neg A, A \sqcup \neg B, B \sqcup \neg A \}
\end{array}
\]
Suppose we have an infinite sequence of rule applications:
\begin{equation}
\label{equation:infinitesequence}
(\mathcal{T}_0, \lab_0) \Rightarrow 
(\mathcal{T}_1, \lab_1 ) \Rightarrow 
(\mathcal{T}_2, \lab_2) \Rightarrow  \ldots
\end{equation}
where $\Rightarrow$ denotes the application of one $\mathcal{ALCQM}$-expansion rule.
In the above sequence, the number of applications of the $\eql, \neql$ and
{\sf close}-rules is finite as we show below:

\begin{enumerate}
\item 
 The $\eql$ and $\neql$-rules can  be applied only a finite number of times in the above
sequence. 
The  $\eql$ and $\neql$-rules add concepts to the Tbox and 
these  concepts that can be added  all belong to 
$\boundM$ which is finite. We also have that $\mathcal{T}_i \subseteq 
\mathcal{T} \cup \boundM$ for all $i$.
 Besides none of the other rules remove elements from the Tbox.
 
\item Since the set 
$\{ (a,b) \mid a, b \in \dom(\mathcal{M})\}$ 
 is finite, the {\sf close}-rule can 
 be applied only  a finite number of times. This is because once
 we set $a \eql b$ or $a \neql b$, no rule can ``undo'' this.
\end{enumerate}
This means that from some $n$ onwards in  sequence  (\ref{equation:infinitesequence})
 \begin{equation}
 \label{equation:secondinfinitesequence}
(\mathcal{T}_n, \lab_n) \Rightarrow
(\mathcal{T}_{n+1}, \lab_{n+1} ) \Rightarrow 
(\mathcal{T}_{n+2}, \lab_{n+2}) \Rightarrow \ldots
\end{equation}
there is  no application
of the rules $\eql$, $\neql$ and {\sf close}. Moreover, 
$\mathcal{T}_n = \mathcal{T}_{i}  $ for all $i \geq n$.
Now,  sequence (\ref{equation:secondinfinitesequence})
contains only application of  $\mathcal{ALCQ}$-expansion rules.
This sequence is  finite 
by \cite[Proposition 5.2]{journals/ai/BaaderBH96}.
This is a contradiction.
\end{proof}

The proof of the following theorem is similar to Soundness for
$\mathcal{ALCQM}$ \cite{journals/ai/BaaderBH96}..

\begin{theorem}[Soundness]
\label{theorem:soundess}
If
$\mathcal{O} = (\mathcal{T}, \mathcal{A}, \mathcal{M})$
is  consistent then
the $\mathcal{ALCQM}$-tableau graph terminates and yields
an $\mathcal{ALCQM}$-complete $(\mathcal{T}_k, \lab_k)$
such  that $\lab_k$ has neither
cycles nor contradictions.
\end{theorem}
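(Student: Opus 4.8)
The plan is to prove the contrapositive-flavoured statement directly: assuming $\mathcal{O}=(\mathcal{T},\mathcal{A},\mathcal{M})$ has a model $\int$, I will show that there is a run of the $\mathcal{ALCQM}$-tableau that terminates in an $\mathcal{ALCQM}$-complete $(\mathcal{T}_k,\lab_k)$ with no cycle and no contradiction. Termination is already available from Theorem~\ref{theorem:termination}, so the real content is that \emph{some} nondeterministic run stays ``faithful'' to the model $\int$. The classical technique for soundness of tableau algorithms applies: I will say that an interpretation $\int$ \emph{is compatible with} a state $(\mathcal{T}_i,\lab_i)$ if $\int\models\mathcal{T}_i$ and there is a mapping $\pi$ from the nodes of $\lab_i$ to $\Delta^{\int}$ such that $\pi$ respects the edge relations, the node labels ($C\in\lab_i(x)$ implies $\pi(x)\in C^{\int}$), and the (in)equality annotations ($x\eql y$ implies $\pi(x)=\pi(y)$, $x\neql y$ implies $\pi(x)\neq\pi(y)$), and moreover $\pi(x)=a^{\int}$ whenever $x$ is the node created for the individual $a$ (using that $\int$ interprets individuals with metamodelling consistently via clause 3 of Definition~\ref{definition:modelALCQM}). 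The initial state is compatible with $\int$ by construction.

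The core lemma is then an invariant: if $\int$ is compatible with $(\mathcal{T}_i,\lab_i)$ and any $\mathcal{ALCQM}$-expansion rule is applicable, then one of its nondeterministic outcomes yields a state $(\mathcal{T}_{i+1},\lab_{i+1})$ still compatible with $\int$ (possibly after extending $\pi$ to a newly created node). For the $\mathcal{ALCQ}$ rules this is exactly what is proved in \cite[Proposition~5.2 and the soundness argument]{journals/ai/BaaderBH96}, which I would cite rather than redo. For the three new rules I would argue as follows. For the $\eql$-rule: if $a\eqm A$, $b\eqm B$ and $a\eql b$ in $\lab_i$, then $\pi$ forces $a^{\int}=b^{\int}$, and by clause 3, $A^{\int}=a^{\int}=b^{\int}=B^{\int}$, so $\int$ satisfies $A\sqcup\neg B$ and $B\sqcup\neg A$; adding these to $\mathcal{T}$ keeps $\int\models\mathcal{T}_{i+1}$. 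For the $\neql$-rule: if $a\neql b$ then $a^{\int}\neq b^{\int}$, hence $A^{\int}=a^{\int}\neq b^{\int}=B^{\int}$, so there is an element $d$ in the symmetric difference; setting $\pi(z)=d$ witnesses $d\in(A\sqcap\neg B\sqcup B\sqcap\neg A)^{\int}$. For the {\sf close}-rule: since $\int$ is a classical interpretation, either $a^{\int}=b^{\int}$ or $a^{\int}\neq b^{\int}$; pick the branch $\equate{a}{b}{\lab}$ or $\makedifferent{a}{b}{\lab}$ accordingly — this is consistent with $\pi$ because $\pi(x)=a^{\int}$ and $\pi(y)=b^{\int}$ for the representatives $x\eql a$, $y\eql b$ (here I would invoke Lemma~\ref{lemma:modelstar}, referenced in the text, to justify working with canonical representatives).

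Having the invariant, I take the run that always picks a model-faithful outcome (such an outcome exists at every step by the invariant, and the run terminates by Theorem~\ref{theorem:termination}), reaching a final $(\mathcal{T}_k,\lab_k)$ that is $\mathcal{ALCQM}$-complete (it must be — if a rule were applicable the run would not have stopped). It remains to check that $\lab_k$ has no contradiction and no cycle. No contradiction: if $C,\neg C\in\lab_k(x)$ then $\pi(x)\in C^{\int}\cap(\neg C)^{\int}=\emptyset$, and if $x\eql y$ and $x\neql y$ then $\pi(x)=\pi(y)$ and $\pi(x)\neq\pi(y)$; both impossible. No cycle: suppose $\lab_k$ had a cycle in the sense of Definition~\ref{definition:cycle}, witnessed by $A_0\eqm a_0,\dots,A_n\eqm a_n$ with $A_{j+1}\in\lab_k(x_j)$, $x_j\eql a_j$ (indices mod $n+1$). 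Applying $\pi$ and clause 3 gives $a_{j+1}^{\int}=A_{j+1}^{\int}\ni\pi(x_j)=a_j^{\int}$, i.e. $a_j^{\int}\in a_{j+1}^{\int}$ for all $j$, so $a_0^{\int}\in a_1^{\int}\in\cdots\in a_n^{\int}\in a_0^{\int}$ — a membership cycle, contradicting well-foundedness of $\Delta^{\int}\subseteq S_N$ (clause 1). The main obstacle I anticipate is purely bookkeeping: getting the {\sf close}-rule case exactly right, since the tableau manipulates canonical representatives of $\eql$-classes and the operations $\equate{\cdot}{\cdot}{\cdot}$ and $\makedifferent{\cdot}{\cdot}{\cdot}$ must be shown to preserve the existence of a compatible $\pi$; beyond that, the argument is the standard soundness scheme plus the well-foundedness observation for the cycle case.
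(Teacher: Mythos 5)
Your proposal is correct and takes essentially the same route as the paper: the paper's invariant is that $\int$ remains a model of the Abox $\aboxLi{\lab_i}$ read off from the current tableau graph (established for the initial graph and preserved, for some nondeterministic choice, by each expansion rule), which is just another packaging of your compatibility mapping $\pi$, and your final cycle-to-non-well-foundedness argument is the same as the paper's. Your version is slightly more explicit about how variable nodes created by the $\exists$/$\geq$-rules are anchored in $\Delta^{\int}$, which the paper leaves implicit in the phrase ``$\int$ is a model of $\aboxLi{\lab_i}$.''
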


The following definition of canonical interpretation is basically 
 the one in \cite[Definition 4.3]{journals/ai/BaaderBH96}.
Instead of  $<$, we use the idea of descendants.

\begin{definition}[$\mathcal{ALCQ}$-Canonical Interpretation]
\label{definition:intcan}
We define the 
$\mathcal{ALCQ}$-canonical interpretation 
$\intcan$ from  a tableau graph $\lab$ as follows.
\[\begin{array}{lll}
\Deltacan & = & \{ x \mid \lab(x) \mbox{ is defined} \} \\
(x)^{\intcan} &  = & \left \{ \begin{array}{ll}
x & \mbox{ if } x \in \Delta^{\intcan} \\
y & \mbox{ if } x \eql y \mbox{ and } y \in \Delta^{\intcan}  
\end{array} \right . \\
(A)^{\intcan} & = & \{ x \in \Delta^{\intcan} \mid A \in \lab(x) \}\\
 (R)^{\intcan} & = 
& \{ (x, y) \in \Delta^{\intcan}  \times \Delta^{\intcan} \mid 
 R \in \lab(x, y) \mbox{ $x$ is not blocked or } \\  
& &  R \in \lab(z,y) \mbox{ where } x 
\mbox{ is blocked by $z$ and $z$ is not blocked }\} 
\end{array}
\]
\end{definition}

%

Note that the canonical interpretation is not defined on equivalence classes of $\eql$
but by choosing  canonical representatives.

\begin{lemma}
\label{lemma:modelintcan}
If 
the tableau algorithm for $\mathcal{ALCQM}$ 
with input $\mathcal{O}= (\mathcal{T}, \mathcal{A}, \mathcal{M})$
yields a $\mathcal{ALCQM}$-complete
 $(\mathcal{T'}, \lab)$ such that $\lab$ has no contradictions
 then
 $\intcan$ is a model of 
 $(\mathcal{T}, \mathcal{A})$. 
\end{lemma}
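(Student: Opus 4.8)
The plan is to reduce the statement to the corresponding correctness result for $\mathcal{ALCQ}$ from \cite{journals/ai/BaaderBH96} rather than redo any induction. The key observation is that every expansion rule of $\mathcal{ALCQ}$ is also an expansion rule of $\mathcal{ALCQM}$, so if $(\mathcal{T'},\lab)$ is $\mathcal{ALCQM}$-complete then in particular no $\mathcal{ALCQ}$-expansion rule is applicable to $(\mathcal{T'},\lab)$; that is, $(\mathcal{T'},\lab)$ is $\mathcal{ALCQ}$-complete with respect to the final Tbox $\mathcal{T'}$. Together with the hypothesis that $\lab$ has no contradiction, this is exactly the situation in which the model-construction lemma of \cite{journals/ai/BaaderBH96} (the soundness direction that goes with the canonical interpretation of Definition \ref{definition:intcan}) guarantees that $\intcan$ is a model of $(\mathcal{T'},\mathcal{A})$.

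First I would make precise that $\lab$ is a legitimate input for this $\mathcal{ALCQ}$ argument. As observed in the proof of Theorem \ref{theorem:termination}, the Tbox only grows during a run and eventually stabilises at $\mathcal{T'}=\mathcal{T}_n$, after which every applied rule is an $\mathcal{ALCQ}$-rule relative to $\mathcal{T'}$; hence the tail of the run is an ordinary $\mathcal{ALCQ}$ expansion, and the borrowed result of \cite{journals/ai/BaaderBH96} — which I would phrase as a statement about an \emph{arbitrary} $\mathcal{ALCQ}$-complete clash-free pair $(\mathcal{T'},\lab)$, not about the output of one particular run — applies verbatim. The graph $\lab$ does contain nodes that a pure $\mathcal{ALCQ}$-tableau for $(\mathcal{T'},\mathcal{A})$ would not have, namely the nodes created at initialisation from individuals occurring only in $\mathcal{M}$ and the witness nodes added by the $\neql$-rule; but these are harmless, since $\mathcal{ALCQM}$-completeness forces the Tbox/unfolding rule to have been applied exhaustively to them as well, so $\intcan$ satisfies every axiom of $\mathcal{T'}$ on all of $\Deltacan$, while the Abox assertions only mention the named individuals, all of which are nodes of $\lab$.

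Finally I would close the small gap between $\mathcal{T'}$ and $\mathcal{T}$: since no rule ever deletes a Tbox axiom we have $\mathcal{T}\subseteq\mathcal{T'}$, hence any model of $(\mathcal{T'},\mathcal{A})$ is a fortiori a model of $(\mathcal{T},\mathcal{A})$, and in particular $\intcan \models (\mathcal{T},\mathcal{A})$, as required.

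The main obstacle I anticipate is not a deep argument but the bookkeeping around the changing Tbox and the extra nodes: one must justify that invoking the $\mathcal{ALCQ}$ correctness result on $(\mathcal{T'},\lab)$ is legitimate even though $\lab$ was not produced by running the pure $\mathcal{ALCQ}$ algorithm on $(\mathcal{T'},\mathcal{A})$. The cleanest way to handle this — and the one I would adopt — is to isolate, once and for all, the statement of \cite{journals/ai/BaaderBH96} in the ``every complete clash-free tableau graph yields a model'' form, so that it can be reused here and in the later completeness proof without re-examining how the graph was built.
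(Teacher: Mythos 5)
Your proposal is correct and follows essentially the same route as the paper: both reduce the claim to the model-construction lemma of \cite{journals/ai/BaaderBH96} applied to the final $\mathcal{ALCQM}$-complete, clash-free pair $(\mathcal{T'},\lab)$, and then use $\mathcal{T}\subseteq\mathcal{T'}$ to pass back to the original Tbox. The only cosmetic difference is that the paper handles the bookkeeping you worry about (extra nodes, renamed representatives) by explicitly defining the Abox ${\sf rel}(\aboxL)$ read off from $\lab$ and observing $\mathcal{A}\subseteq{\sf rel}(\aboxL)$, whereas you argue the same containment directly.
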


\begin{proof} 
We define  ${\sf rel} (\aboxL)$ as follows.
   \[
 \begin{array}{ll}
 {\sf rel} (\aboxL) = 
 & \{ C(x) \mid C \in \lab(y ), y \eql x \} \cup \\ 
          &      \{R(a,b) \mid R \in \lab(x,y), a \eql x, b \eql y                
               \{ a, b\} \subseteq \mathcal{O}  
               \} \cup  \\
          &      \{ x = y \mid x \eqL y  \} \cup
                \{ x \not = y \mid x \not \eqL y  \}
\end{array} 
 \]
By  \cite[Lemma 5.5]{journals/ai/BaaderBH96}),
$\intcan$ is a model of $(\mathcal{T'},  {\sf rel} (\aboxL) )$.
Since $\mathcal{T} \subseteq \mathcal{T'}$ and
$\mathcal{A} \subseteq {\sf res} (\aboxL)$, we have that
$\intcan$ is a model of $(\mathcal{T}, \mathcal{A})$.
\end{proof}

So, how can we now make $\intcan$ into a model of the whole ontology $(\mathcal{T}, \mathcal{A}, \mathcal{M})$? We will transform $\intcan$ into a model of 
$(\mathcal{T}, \mathcal{A}, \mathcal{M})$ by defining a function $\unfold$. 
The following lemma allows us to give a recursive definition of $\unfold$.
 
\begin{lemma}
\label{lemma:wellfounded}
If the tableau graph $\lab$ has no cycles then  
$(\Deltacan, \prec)$ is well-founded 
where $\prec$ is the relation defined as $y \prec x$ if
$y \in (A)^{\intcan}$, $x \eql a$ and $a \eqm A \in \mathcal{M}$.
\end{lemma}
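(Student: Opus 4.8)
The plan is to derive well-foundedness of $(\Deltacan, \prec)$ from the absence of cycles in the tableau graph $\lab$. Since $\Deltacan$ is a finite set (its elements are the nodes $x$ with $\lab(x)$ defined, and the tableau is finite by Theorem \ref{theorem:termination}), a relation on $\Deltacan$ is well-founded if and only if it has no infinite descending chain, which for a finite set is equivalent to having no $\prec$-cycle, i.e. no finite sequence $x_0, x_1, \ldots, x_{k-1}, x_k = x_0$ with $x_{i+1} \prec x_i$ for each $i$. So the crux is: a $\prec$-cycle in $\intcan$ yields a cycle of $\lab$ with respect to $\mathcal{M}$ in the sense of Definition \ref{definition:cycle}, contradicting the hypothesis.

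First I would unfold the definition of $\prec$ along such a putative cycle. Suppose $x_k \prec x_{k-1} \prec \cdots \prec x_1 \prec x_0$ with $x_k = x_0$ (relabelling indices so the chain closes up). By definition of $\prec$, for each step $x_{i+1} \prec x_i$ there is a meta-modelling axiom $a_i \eqm A_i \in \mathcal{M}$ such that $x_i \eql a_i$ and $x_{i+1} \in (A_i)^{\intcan}$. Now I invoke the definition of the canonical interpretation: $x_{i+1} \in (A_i)^{\intcan}$ means $A_i \in \lab(x_{i+1})$. Reading the indices cyclically (so $x_{k} = x_0$), this is exactly the pattern $A_i \in \lab(x_{i+1})$ together with $x_i \eql a_i$ for the axioms $a_i \eqm A_i$, which is precisely the shape of a cycle in Definition \ref{definition:cycle} (after a harmless reindexing/rotation to match $A_0 \in \lab(x_k)$, $A_j \in \lab(x_{j-1})$). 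Hence $\lab$ has a cycle with respect to $\mathcal{M}$, contradicting the hypothesis that it has none.

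The one point requiring a little care — and the likely main obstacle — is matching the bookkeeping of Definition \ref{definition:cycle} exactly: there the membership statements read $A_{j+1} \in \lab(x_j)$ paired with $x_j \eql a_j$, and the chain closes with $A_0 \in \lab(x_n)$. So I would set up the $\prec$-chain as $x_n \prec x_{n-1} \prec \cdots \prec x_0$ where the step witnessed by axiom $a_j \eqm A_j$ gives $x_j \eql a_j$ and $x_{j+1} \in (A_{j+1})^{\intcan}$, i.e. $A_{j+1} \in \lab(x_{j+1})$; since we are on a cycle $x_{n} = x_0$ (or more precisely the chain contains a repeated node, and we may pass to the sub-cycle between the two occurrences), the last membership $A_0 \in \lab(x_n)$ follows. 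A subtlety worth a sentence: $x \eql y$ should be understood as the equality relation recorded in the tableau, and since $x_i \eql a_i$ with $a_i \in \dom(\mathcal{M})$ and $x_i \in \Deltacan$, all nodes involved are genuine nodes of $\lab$, so Definition \ref{definition:cycle} applies verbatim. Everything else is just transitivity of "no infinite descending chain $\Leftrightarrow$ well-founded" on a finite set, together with the observation that an infinite descending $\prec$-chain in a finite domain must repeat a node and thus contain a $\prec$-cycle.
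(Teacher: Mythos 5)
Your proof takes essentially the same route as the paper's: the paper argues that since $\Deltacan$ is finite, an infinite descending $\prec$-sequence must come from a $\prec$-cycle, and that such a cycle contradicts the absence of cycles in $\lab$, leaving the unfolding against Definition \ref{definition:cycle} as ``easy to see'', which is exactly the detail you supply. Your only wobble is the index bookkeeping: the axiom witnessing $x_{j+1}\prec x_j$ yields $x_j\eql a_j$ together with $A_j\in\lab(x_{j+1})$ (not $A_{j+1}\in\lab(x_{j+1})$), so the cleanest match with Definition \ref{definition:cycle} is obtained by traversing the cycle in the increasing direction and naming the witness of $x_j\prec x_{j+1}$ as $a_{j+1}\eqm A_{j+1}$ --- but this is precisely the ``harmless reindexing'' you flag, and it does not affect correctness.
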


\begin{proof}
Suppose $(\Deltacan, \prec)$  is not well-founded. Since $\Deltacan$ 
is finite, infinite descendent $\prec$-sequences can only be formed from
$\prec$-cycles, i.e. they are of the form 
 \[ y_n \prec y_1 \prec  \ldots  \prec y_n  \]
  It is easy to see that 
 this contradicts the fact
 that $\lab$ has no cycles. 
\end{proof}
%

\begin{definition}[From Basic Objects to Sets: the function $\unfold$]
 \label{definition:unfold}
Let $\lab$ a tableau graph without cycles and 
$\intcan$ be the $\mathcal{ALCQ}$-canonical interpretation from
 $\lab$. 
 For  $x \in \Delta^{\intcan}$  we define $\unfold(x)$ as follows.
\[
\begin{array}{lll}
\unfold(x) & = \{ \unfold(y) \mid y \in (A)^{\intcan} \}
  & \mbox{ if $x \eql a$ for some  $a \eqm A \in \mathcal{M}$ }\\ 
\unfold(x) & = x & \mbox{otherwise}
\end{array}
\]
\end{definition}
\begin{lemma}
\label{lemma:modelstar}
Let $\lab$ be a $\mathcal{ALCQM}$-complete tableau graph without contradictions.
If $a \eqm A$ and $a' \eqm A'$
then
either $a \eql  a'$ or $ a \not \eql a' $.
In the first case,
 $A^{\intcan} = A'^{\intcan}$ and in the second case,
  $A^{\intcan}  \not =  A'^{\intcan}$
\end{lemma}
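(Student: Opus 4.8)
The plan is to split into the two cases according to whether the canonical representatives of $a$ and $a'$ are related by $\eql$ or $\neql$, and in each case use completeness of $\lab$ to produce the required (in)equality between $A^{\intcan}$ and $A'^{\intcan}$. First I would argue that \emph{at least one} of $a \eql a'$ or $a \neql a'$ holds: since $a \eqm A$ and $a' \eqm A'$ are in $\mathcal{M}$, and $\lab$ is $\mathcal{ALCQM}$-complete, the {\sf close}-rule is not applicable, which means that for the canonical representatives $x$ of $a$ and $y$ of $a'$ (both in $\Deltacan$, hence with $\lab(x),\lab(y)$ defined) either $x \eql y$ or $x \neql y$ is already set; unwinding $a \eql x$ and $a' \eql y$ gives $a \eql a'$ or $a \neql a'$. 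That these are mutually exclusive is immediate from the fact that a tableau graph without contradictions cannot have both $x \eql y$ and $x \neql y$.

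Case $a \eql a'$: here the $\eql$-rule is not applicable, so $A \sqcup \neg A', A' \sqcup \neg A \in \mathcal{T}'$ (the final Tbox). Since by Lemma~\ref{lemma:modelintcan} (or the underlying $\mathcal{ALCQ}$ result, \cite[Lemma 5.5]{journals/ai/BaaderBH96}) $\intcan$ is a model of $\mathcal{T}'$, these two inclusions force $A^{\intcan} \subseteq A'^{\intcan}$ and $A'^{\intcan} \subseteq A^{\intcan}$, i.e.\ $A^{\intcan} = A'^{\intcan}$.

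Case $a \neql a'$: here the $\neql$-rule is not applicable, so there is a node $z \in \Deltacan$ with $A \sqcap \neg A' \sqcup A' \sqcap \neg A \in \lab(z)$. I would then invoke the fact that $\intcan$ satisfies its own labels on the disjunction and on whichever conjunct is chosen (the $\sqcup$- and $\sqcap$-rules are not applicable either), so $z \in (A \sqcap \neg A')^{\intcan}$ or $z \in (A' \sqcap \neg A)^{\intcan}$; in the first case $z \in A^{\intcan} \setminus A'^{\intcan}$, in the second $z \in A'^{\intcan} \setminus A^{\intcan}$, and either way $A^{\intcan} \neq A'^{\intcan}$.

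The main obstacle I anticipate is purely bookkeeping: making sure the passage between "axioms/labels on which no rule is applicable" and "facts true in $\intcan$" is justified by the already-established model property of $\intcan$ for $(\mathcal{T}', \mathrm{rel}(\aboxL))$, rather than silently re-proving a model-existence argument. In particular one must be careful that the witness node $z$ produced by the $\neql$-rule really lies in $\Deltacan$ (its label is defined) and that the canonical representatives used in the {\sf close}-rule non-applicability argument are exactly the ones in the definition of $\intcan$; this is where the caveat in the text ("the tableau algorithm works with canonical representatives of the $\eql$-equivalence classes") has to be handled with care.
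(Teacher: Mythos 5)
Your proof is correct and follows essentially the same route as the paper's: non-applicability of the {\sf close}-rule gives the dichotomy $a \eql a'$ or $a \neql a'$ (exclusive by the no-contradiction assumption), the $\eql$-rule together with the Tbox gives $A^{\intcan}=A'^{\intcan}$ in the first case, and the witness node $z$ produced by the $\neql$-rule separates the two extensions in the second. The only cosmetic difference is that in the $\eql$ case the paper argues directly on labels (showing $A\in\lab(y)$ iff $A'\in\lab(y)$ via the $\mathcal{T}$- and $\sqcup$-rules) rather than invoking the already-established model property of $\intcan$.
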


\begin{lemma}
\label{lemma:unfoldwelldefined}
Let $\lab$ be a $\mathcal{ALCQM}$-complete tableau graph that has neither
contradictions nor cycles  and let 
$\intcan$ be  the   canonical interpretation from $\lab$.
Then,
 $\unfold$ is an injective  function, i.e.
$x = x'$ if and only if $\unfold(x) = \unfold(x')$. 
\end{lemma}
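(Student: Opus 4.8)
The plan is to treat the two implications of the stated equivalence separately. The implication $x = x' \Rightarrow \unfold(x)=\unfold(x')$ is immediate once we know that $\unfold$ is a well-defined function. Its recursion terminates because every recursive call $\unfold(y)$ in Definition~\ref{definition:unfold} is made on a $y$ with $y \prec x$, and $\prec$ is well-founded by Lemma~\ref{lemma:wellfounded}. Moreover it is independent of the meta-modelling axiom chosen for $x$: if $x \eql a$ and $x \eql a'$ with $a \eqm A$, $a' \eqm A' \in \mathcal{M}$, then $a \eql a'$ (since distinct nodes of $\Deltacan$ lie in distinct $\eql$-classes, as required for Definition~\ref{definition:intcan} to make sense), so Lemma~\ref{lemma:modelstar} gives $A^{\intcan}=A'^{\intcan}$ and hence $\{\unfold(y)\mid y\in A^{\intcan}\}=\{\unfold(y)\mid y\in A'^{\intcan}\}$. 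Finally I would fix, once and for all, the harmless convention that no tableau node is a set (renaming the nodes if necessary; equivalently, $S_0$ is chosen to consist of exactly the nodes untouched by $\unfold$), so that whenever $\unfold(z)=z$ this value is an atom, never equal to a set-valued $\unfold(z')$.

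For the converse --- injectivity --- I would argue by well-founded induction on $\prec$, proving for each $x\in\Deltacan$ the statement $P(x)$: for every $x'\in\Deltacan$, $\unfold(x')=\unfold(x)$ implies $x'=x$. So fix $x$, assume $P(y)$ for all $y\prec x$, and suppose $\unfold(x')=\unfold(x)$. If $x$ is not $\eql$ to any individual with meta-modelling, then $\unfold(x)=x$ is an atom; since any node that is $\eql$ to such an individual is sent by Definition~\ref{definition:unfold} to a set, $x'$ cannot be such a node either, so $\unfold(x')=x'$ and $x'=x$; the case where $x'$ is the ``un-unfolded'' one is symmetric. Hence we may assume $x \eql a$ and $x'\eql a'$ for some $a\eqm A$, $a'\eqm A'\in\mathcal{M}$, so that $\unfold(x)=\{\unfold(y)\mid y\in A^{\intcan}\}$ and $\unfold(x')=\{\unfold(y)\mid y\in A'^{\intcan}\}$.

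By Lemma~\ref{lemma:modelstar}, either $a\eql a'$ or $a\neql a'$. In the first case $x\eql a\eql a'\eql x'$, and as $x,x'\in\Deltacan$ this forces $x=x'$. In the second case Lemma~\ref{lemma:modelstar} gives $A^{\intcan}\neq A'^{\intcan}$, so we may pick $v$ in the symmetric difference. If $v\in A^{\intcan}\setminus A'^{\intcan}$, then $v\prec x$ and $\unfold(v)\in\unfold(x)=\unfold(x')$, so $\unfold(v)=\unfold(w)$ for some $w\in A'^{\intcan}$; applying $P(v)$ to $w$ gives $v=w\in A'^{\intcan}$, a contradiction. If instead $v\in A'^{\intcan}\setminus A^{\intcan}$, then $\unfold(v)\in\unfold(x')=\unfold(x)$, so $\unfold(v)=\unfold(w)$ for some $w\in A^{\intcan}$, hence $w\prec x$, and $P(w)$ applied to $v$ gives $w=v\in A^{\intcan}$, again a contradiction. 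So $a\neql a'$ is impossible, and $x=x'$ in every case.

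The step I expect to be the main obstacle is organising the induction so that the hypothesis is always invoked legitimately: the witness $v$ for $A^{\intcan}\neq A'^{\intcan}$ may sit on either side of the symmetric difference, and one must in each case produce a partner node that is a $\prec$-predecessor of the \emph{fixed} node $x$ (not of $x'$) before appealing to $P$. Besides that, the only care needed is the bookkeeping for well-definedness of $\unfold$ --- where Lemma~\ref{lemma:modelstar} together with the canonical-representative convention of Definition~\ref{definition:intcan} do the work --- and the mild set-theoretic stipulation that tableau nodes be atoms.
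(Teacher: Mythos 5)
Your proof is correct and follows essentially the same route as the paper's: well-definedness of $\unfold$ via Lemma~\ref{lemma:modelstar}, then injectivity by well-founded induction on $\prec$ using Lemmas~\ref{lemma:wellfounded} and~\ref{lemma:modelstar}. The only difference is organisational --- you case-split on $a \eql a'$ versus $a \neql a'$ and refute the latter via a witness in the symmetric difference, whereas the paper derives $A^{\intcan} = A'^{\intcan}$ directly from the set equality and then concludes $a \eql a'$ --- and your explicit handling of the atom-versus-set disjointness in the base case is a point the paper glosses over.
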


\begin{proof}
We prove first that $\unfold$ is a function.
It is enough to consider the case 
when $x \eql a \eqm A$ and  $ x \eql a' \eqm A'$.
By Lemma \ref{lemma:modelstar}, $a \eql a'$ and
 $(A)^{\intcan} =(A')^{\intcan}$. Hence,
$\unfold(x)$ is uniquely determined.

To prove that $\unfold$ is injective, 
we do induction on $(\Deltacan, \prec)$ which we know that is well-founded by
Lemma \ref{lemma:wellfounded}.
By Definition of $\unfold$, we have two cases.
The first case  is when $\unfold(x) = x$. 
We have that  $\unfold(x') = x$ and $x'$ is exactly $x$.
This was the base case. 
In the second case, we have that
for $x \eql a$ and $a \eqm A$,
\[ \unfold(x) =
\{ \unfold(y) \mid y \in (A)^{\intcan} \}
\]
Since $\unfold(x) = \unfold(x')$, we also have
that $x' \eql a'$ and $a'\eqm A'$ such that
\[ \unfold(x') =
\{ \unfold(y') \mid y' \in (A')^{\intcan} \}
\]
Again since $\unfold(x) = \unfold(x')$,
we have that $\unfold(y) = \unfold(y')$.
By Induction Hypothesis, $y = y'$ for all $y \in (A)^{\intcan} $.
Hence, $(A)^{\intcan} \subseteq (A')^{\intcan}$.
Similarly, we get $(A')^{\intcan} \subseteq (A)^{\intcan}$.
So, $(A)^{\intcan} = (A')^{\intcan}$.
It follows from Lemma \ref{lemma:modelstar} that
$a \eql a'$. Then, $x = x'$ because the canonical representative
of an equivalence class is unique.
\end{proof}
 We are now ready to define the canonical interpretation
 for an ontology in $\mathcal{ALCQM}$.

\begin{definition}[Canonical Interpretation for $\mathcal{ACLQM}$]
\label{definition:intext}
Let $\lab $ be an $\mathcal{ALCQM}$-complete tableau graph without cycles
and without contradictions.
We define  the canonical interpretation $\mathcal{I}^{m}$
for $\mathcal{ALCQM}$  as follows:
\[
\begin{array}{ll}
\Deltaext & = \{ \unfold(x) \mid x \in \Deltacan \} \\
(a)^{\intext} & = \unfold(a) \\
(A)^{\intext}  & = \{ \unfold(x) \mid x \in A^{\intcan} \}\\
(R)^{\intext} & = \{ (\unfold(x), \unfold(y)) \mid (x,y) \in (R)^{\intcan} \}
\end{array}
\]
\end{definition}

\begin{definition}[Isomorphism between interpretations of
$\mathcal{ALCQ}$]\\
An isomorphism between two 
interpretations $\int $ and $\int'$ of $\mathcal{ALCQ}$
is a bijective function $f: \Delta \rightarrow \Delta'$
such that
\begin{itemize}
\item
$f(a^{\int}) = a^{\int'}$
\item
$x \in A^{\int}$ if and only if $f(x) \in A^{\int'}$

\item
$(x,y) \in R^{\int}$ if and only if
$(f(x), f(y)) \in R^{\int'}$.
 
\end{itemize}

\end{definition}

\begin{lemma}
\label{lemma:isomorphism}
Let $\int$ and $\int'$ be two isomorphic interpretations of $\mathcal{ALCQ}$.
Then, 
$\int$ is a model of $(\mathcal{T}, \mathcal{A})$
if and only if
$\int'$ is a model of $(\mathcal{T}, \mathcal{A})$.
\end{lemma}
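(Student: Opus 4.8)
The plan is to establish the standard fact that an isomorphism of $\mathcal{ALCQ}$ interpretations preserves satisfaction of every $\mathcal{ALCQ}$ statement. First I would observe that if $f : \Delta \to \Delta'$ is an isomorphism then its inverse $f^{-1} : \Delta' \to \Delta$ is again an isomorphism (each of the three clauses in the definition is symmetric in $\int$ and $\int'$ once $f$ is known to be a bijection). Hence it suffices to prove one implication, say: if $\int$ is a model of $(\mathcal{T},\mathcal{A})$ then so is $\int'$.

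The heart of the argument is a claim proved by structural induction on concept expressions $C$: for all $x \in \Delta$, $x \in C^{\int}$ if and only if $f(x) \in C^{\int'}$; equivalently, since $f$ is a bijection, $f(C^{\int}) = C^{\int'}$. The base cases $C = A$ (atomic), $C = \top$ and $C = \bot$ follow directly from the definition of isomorphism together with $f$ being a bijection onto $\Delta'$. The Boolean cases $\neg C$, $C \sqcap D$, $C \sqcup D$ follow because a bijection commutes with complement, intersection and union of the relevant subsets, using the induction hypothesis for $C$ and $D$. For $\exists R.C$, $\forall R.C$, $\geq n R.C$ and $\leq n R.C$ the key observation is that, for a fixed $x$, the role-preservation clause of the isomorphism together with the induction hypothesis for $C$ shows that $f$ restricts to a bijection between the set of $R$-successors of $x$ lying in $C^{\int}$ and the set of $R$-successors of $f(x)$ lying in $C^{\int'}$. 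This immediately gives the cases $\exists R.C$ (one set is nonempty iff the other is) and $\forall R.C$ (every $R$-successor of $x$ is in $C^{\int}$ iff every $R$-successor of $f(x)$ is in $C^{\int'}$, again using that $f$ maps \emph{all} $R$-successors bijectively), and since a bijection preserves cardinality it also gives $\geq n R.C$ and $\leq n R.C$.

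Once the claim is in place, verifying the statements of $\mathcal{T}$ and $\mathcal{A}$ is routine. For $C \sqsubseteq D \in \mathcal{T}$: $C^{\int} \subseteq D^{\int}$ iff $f(C^{\int}) \subseteq f(D^{\int})$ (as $f$ is injective) iff $C^{\int'} \subseteq D^{\int'}$ by the claim. For $C(a) \in \mathcal{A}$: $a^{\int} \in C^{\int}$ iff $f(a^{\int}) \in C^{\int'}$ by the claim, and $f(a^{\int}) = a^{\int'}$. For $R(a,b) \in \mathcal{A}$: use $f(a^{\int}) = a^{\int'}$, $f(b^{\int}) = b^{\int'}$ and the role-preservation clause. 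For $a = b$ and $a \neq b$: since $f$ is injective, $a^{\int} = b^{\int}$ iff $f(a^{\int}) = f(b^{\int})$, i.e.\ iff $a^{\int'} = b^{\int'}$. The only step that needs a little care — the main obstacle, such as it is — is the number restrictions: one must set up the induction so that the hypothesis for the filler $C$ is available, and then argue carefully that the global bijection $f$ that preserves $R$ restricts to a bijection between the $C$-fillers of $x$ and the $C^{\int'}$-fillers of $f(x)$, so that the counting conditions transfer faithfully. Everything else is a direct unfolding of the definitions.
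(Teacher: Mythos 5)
Your proposal is correct and follows the same route the paper indicates: the paper's entire proof is the remark that it suffices to show $x \in C^{\int}$ iff $f(x) \in C^{\int'}$ by induction on $C$, which is exactly the central claim you establish before checking the Tbox and Abox statements. Your treatment of the number restrictions (the bijection between $R$-successors in the respective filler sets) and of the equality/inequality assertions just fills in the details the paper leaves implicit.
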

To prove the previous lemma is enough to show that $x \in C^{\int}$ if and only if
$f(x) \in C^{\int'}$ by induction on $C$.

\begin{theorem}[Completeness]
\label{theorem:completeness}
If $(\mathcal{T}, \mathcal{A}, \mathcal{M})$ is not consistent
then   the $\mathcal{ALCQM}$-tableau algorithm
with input $(\mathcal{T}, \mathcal{A}, \mathcal{M})$
terminates and yields  an $\mathcal{ALCQM}$-complete $(\mathcal{T'}, \lab)$
such that  
$\lab$ that has either a contradiction or a cycle.
\end{theorem}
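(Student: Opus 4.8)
The plan is to prove the contrapositive: if the $\mathcal{ALCQM}$-tableau algorithm with input $(\mathcal{T},\mathcal{A},\mathcal{M})$ can yield some $\mathcal{ALCQM}$-complete $(\mathcal{T'},\lab)$ in which $\lab$ has neither a contradiction nor a cycle, then $(\mathcal{T},\mathcal{A},\mathcal{M})$ is consistent. Termination of the algorithm is already guaranteed by Theorem~\ref{theorem:termination}, so the only real work is to exhibit a model. The starting point is $\intcan$: by Lemma~\ref{lemma:modelintcan}, since $(\mathcal{T'},\lab)$ is $\mathcal{ALCQM}$-complete and $\lab$ has no contradiction, $\intcan$ is a model of $(\mathcal{T},\mathcal{A})$ in $\mathcal{ALCQ}$. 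However $\intcan$ need not satisfy $\mathcal{M}$, since the meta-modelling individuals are still interpreted as basic objects rather than as the extensions of their associated concepts, so I would pass to $\intext$ of Definition~\ref{definition:intext}.

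Next I would justify that $\intext$ is well-defined and that $\unfold$ is an isomorphism of $\mathcal{ALCQ}$-interpretations from $\intcan$ onto $\intext$. Since $\lab$ has no cycles, Lemma~\ref{lemma:wellfounded} gives that $(\Deltacan,\prec)$ is well-founded, so the recursion in Definition~\ref{definition:unfold} is legitimate, and Lemma~\ref{lemma:unfoldwelldefined} tells us $\unfold$ is an injective function; it is surjective onto $\Deltaext$ by the very definition of $\Deltaext$, hence a bijection. The three isomorphism clauses then follow almost immediately from Definition~\ref{definition:intext}: $\unfold(a^{\intcan}) = a^{\intext}$; $x \in A^{\intcan} \iff \unfold(x) \in A^{\intext}$, where the right-to-left direction uses injectivity of $\unfold$; and similarly for roles. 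Applying Lemma~\ref{lemma:isomorphism} then transfers ``being a model of $(\mathcal{T},\mathcal{A})$'' from $\intcan$ to $\intext$, which establishes the second clause of Definition~\ref{definition:modelALCQM}.

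Then I would verify the two remaining clauses of Definition~\ref{definition:modelALCQM} for $\intext$. For clause~3, given $a \eqm A \in \mathcal{M}$ we have $a \eql a$, so the first case of Definition~\ref{definition:unfold} applies and $a^{\intext} = \unfold(a) = \{\unfold(y) \mid y \in A^{\intcan}\} = A^{\intext}$, i.e. $\intext \models \mathcal{M}$. For clause~1, I would show $\Deltaext \subseteq S_N$ for some $N$ by induction along the well-founded and \emph{finite} order $(\Deltacan,\prec)$: collect all $x$ with $\unfold(x)=x$ into $S_0$ (taking $S_0$ to be an arbitrary non-empty set of atomic objects if there are none), and if $x \eql a \eqm A$ then every $y \in A^{\intcan}$ satisfies $y \prec x$, so by induction $\unfold(y) \in S_{k_y}$; since $A^{\intcan}$ is finite, $\unfold(x) \subseteq S_k$ for $k = \max_y k_y$, whence $\unfold(x) \in S_{k+1}$. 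Finiteness of $\Deltacan$ then yields a uniform bound $N$. Thus $\intext$ is a model of $(\mathcal{T},\mathcal{A},\mathcal{M})$, so $(\mathcal{T},\mathcal{A},\mathcal{M})$ is consistent, which completes the contrapositive.

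The main obstacle is not a single deep step but the careful bookkeeping needed to see that $\unfold$ really is an isomorphism once one works with canonical representatives rather than $\eql$-equivalence classes: one must check that $\unfold$ is constant on each $\eql$-class, so that $a^{\intext} := \unfold(a)$ is unambiguous and agrees with $\unfold(a^{\intcan})$. This in turn rests on Lemma~\ref{lemma:modelstar}, which forces two meta-modelling individuals to be either equal with equal concept extensions or different with different concept extensions --- exactly what makes $\unfold$ simultaneously a function and injective. Once that is in place, the well-foundedness clause is a routine finite induction and the concept/role clauses are pure bookkeeping, but getting the representatives straight is where the care is required.
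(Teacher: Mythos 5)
Your proposal is correct and follows essentially the same route as the paper: the contrapositive is established by turning $\intcan$ (Lemma~\ref{lemma:modelintcan}) into $\intext$ via the bijection $\unfold$ (Lemmas~\ref{lemma:wellfounded} and~\ref{lemma:unfoldwelldefined}), transferring modelhood by Lemma~\ref{lemma:isomorphism}, and checking the Mbox and well-foundedness clauses directly. You actually spell out two points the paper leaves implicit --- the induction showing $\Deltaext \subseteq S_N$ and the well-definedness of $a^{\intext}$ on $\eql$-classes via Lemma~\ref{lemma:modelstar} --- which is a welcome refinement rather than a deviation.
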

\begin{proof} 
By Theorem \ref{theorem:termination}, the $\mathcal{ALCQM}$-tableau algorithm
with input $(\mathcal{T}, \mathcal{A}, \mathcal{M})$
terminates.  Suppose towards a contradiction that
 the algorithm  yields  an $\mathcal{ALCQM}$-complete  
$(\mathcal{T'},\lab)$ such that  that $\lab$
has neither a contradiction nor a cycle.  
We will prove that 
$\intext$ is a model of $(\mathcal{T}, \mathcal{A}, \mathcal{M})$.
For this we have to check that $\intext$ satisfies the 
three conditions of Definition \ref{definition:modelALCQM}.
\begin{enumerate}
\item 
In order to  prove that $\Deltaext \subseteq S_N$ for some $S_N$ and $N$,
we define
$S_0 = \{ x \in \Deltacan \mid \unfold(x) = x \}$.

\item We now prove that $\intext$ is a model of $(\mathcal{T}, \mathcal{A})$.
By Lemma \ref{lemma:modelintcan},
the canonical interpretation $\intcan$ is a model of
$(\mathcal{T}, \mathcal{A})$.
It follows from  Lemma \ref{lemma:unfoldwelldefined} that
$\unfold: \Deltacan \rightarrow \Deltaext$ is a bijective map.
It is also easy to show that 
$\intcan$ and $\intext$ are isomorphic interpretations
in $\mathcal{ALCQ}$.
By Lemma \ref{lemma:isomorphism},
$\intext$ is a model of $(\mathcal{T}, \mathcal{A})$.

\item  Finally, we  prove
that $a^{\intext} = (A)^{\intext}$ for all $ a \eqm A \in \mathcal{M}$. 
Suppose that $ a \eqm A \in \mathcal{M}$. 
Then,
\[
\begin{array}{lll}
a^{\intext} & = \unfold(a) & \mbox{ by Definition \ref{definition:intext}} \\
            & = \{ \unfold(x) \mid x \in (A)^{\intcan} \} & \mbox{ by Definition \ref{definition:unfold}} \\
            & = A^{\intext} & \mbox{ by Definition \ref{definition:intext}}
\end{array}
\]
\end{enumerate}
\end{proof}

A direct  corollary from the above result is that
$\mathcal{ALCQM}$ satisfies the finite model property.

\section{Conclusions and Future Work}
\label{sec:relWorks}

 In this paper we present a tableau algorithm for checking consistency of an ontology in $\mathcal{ALCQM}$ and prove its correctness. 
In order to implement our algorithm, we plan to incorporate 
optimization techniques such as normalization, absorption
or the use of heuristics \cite[Chapter9]{DBLP:conf/dlog/2003handbook}.  \\
A first step to optimize the algorithm would be to
impose the following order on the application of the expansion rules.
We apply the  rules that create nodes ($\exists$ and $\geq$)
 only if the other rules are
not applicable. We apply the bifurcating 
rules  ($\sqcup$, {\sf choose} or {\sf close}-rules)
if the remaining rules (all rules except the $\exists$, $\geq$, $\sqcup$,
{\sf choose} and {\sf close}-rules) are not applicable.
One could prove that this strategy is correct similarly to 
 Section \ref{sec:CorrectTableau}.\\
 A second step to optimize the algorithm would be to change
the $\eql$-rule. 
 Instead of adding  $A \sqcup \neg B$ and $\neg A \sqcup B$,
we could add $A \equiv B$  and treat this as a trivial case
of lazy unfolding.\\
We would also like to study decidability of consistency 
 for the kind of meta-modelling presented in this paper
in  more powerful Description Logics than $\mathcal{ALCQM}$.\\
We believe that consistency in $\mathcal{ALCQM}$
 has the same complexity as $\mathcal{ALCQ}$, 
which is Exp-time complete \cite{TobiesPhDThesis2001}. 
We also plan to study worst-case
optimal tableau algorithms for $\mathcal{ALCQM}$
 \cite{DBLP:journals/ai/DoniniM00,DBLP:conf/dlog/GoreN07}.

\paragraph{Acknowledgements.}
We are grateful to Diana Comesa\~{n}a for sharing with us
the data from the ontology network on geographic objects she is
developing in Uruguay  \cite{bworld}.

%

\appendix

\section{Example of $\mathcal{ALCQM}$-ontology}

Figure \ref{figure:boxes}
shows the $\mathcal{ALCQM}$-ontology of Figure \ref{fig:secondView}.

\begin{figure}
{\small 
\begin{center}
\begin{tabular}{lr}
{\bf Tbox} \\
\begin{tabular}{|ll|}
\hline  &
\\[-0.8ex]
$GovernmentOffice \sqsubseteq \exists manages.GeographicObject$ &\\ 
$Activity \sqsubseteq \forall over.(HydrographicObject \sqcup FloraObject)$ &\\
$FloraObject \sqsubseteq \forall associatedWith.HydrographicObject$ &\\
$River \sqcap Lake \sqsubseteq \bot $ &\\[0.5ex]
\hline
\end{tabular}
\end{tabular} 
\end{center}

\begin{center}
\begin{tabular}{lr}
{\bf Abox} \\
\begin{tabular}{|ll|}
\hline & 
\\[-0.8ex]
$GeographicObject(hydrographic)$ & $GeographicObject(physiographic)$ \\
$GeographicObject(flora)$ &\\
$GovernmentOffice(environment)$ & $GovernmentOffice(agriculture)$\\
$Activity(preservation)$ & $Activity(irrigation)$\\
$manages(environment, physiographic) $ &\\
$promotes(environment, preservation) $ & $promotes(agriculture, irrigation) $\\
$HydrographicObject(river)$ & $HydrographicObject(lake)$\\
$FloraObject(wetland)$ & $FloraObject(grassland)$\\
$FloraObject(naturalForest)$ &\\
$over(preservation, wetland)$ & $over(irrigation, grassland)$\\
$associatedWith(wetland, lake)$ &\\
$associatedWith(naturalForest, river)$ &\\
$River(queguay)$ & $River(santaLucia)$\\
$Lake(deRocha)$ & $Lake(delSauce)$\\
$Wetland(staLuciaWetland)$ & $Wetland(deRochaWetland)$\\
$NaturalForest(castillosPalmForest)$ & $NaturalForest(queguayForest)$\\[0.5ex]
\hline
\end{tabular}
\end{tabular}
\end{center}

\begin{center}
\begin{tabular}{lr}
{\bf Mbox} \\
\begin{tabular}{|lll|}
\hline & & \\[-0.8ex]
$river \eqm  River$ & $wetland \eqm Wetland$ &$hydrographic \eqm HydrographicObject$  \\
$lake \eqm Lake$
 &  $naturalForest \eqm NaturalForest$ & $flora \eqm FloraObject$ \\[0.5ex]
 \hline
\end{tabular}
\end{tabular}
\end{center}

}
\caption{The $\mathcal{ALCQM}$-ontology of  Figure \ref{fig:secondView}}

\label{figure:boxes} 
\end{figure}

\section{Tableau Algorithm for $\mathcal{ALCQ}$}
\label{appendix:tableauALCQ}

We first   recall the tableau algorithm for checking consistency in
$\mathcal{ALCQ}$ \cite{journals/ai/BaaderBH96,FOST}. 
We follow the presentation of \cite{FOST} using tableau graphs
and make some small changes to be able to accommodate equalities and
inequalities in the Abox.
As in \cite{FOST}, we assume that 
all concepts in $\mathcal{O} = (\mathcal{T}, \mathcal{A})$
are in negation normal form and that $\mathcal{T}$ is a set of concepts.
Each  statement $C \sqsubseteq D$ of the Tbox is transformed into  
the concept $\neg C \sqcup D$.

\begin{definition}[Tableau Graph] 
A \emph{tableau graph}  consists of

\begin{itemize}

\item a set of nodes, labelled with  individual names
or variable names,

\item directed edges between some pairs of nodes,

\item for each node labelled $x$, $\lab(x)$
 is either undefined
or defined. If it is defined then it is a set of concept expressions, 
\item for each pair of nodes $x$ and $y$,  $\lab(x, y)$
is either undefined or defined. If it is defined, then it is a set 
of role names,
\item two relations between nodes, denoted by $\eql$ and $\neql$.
These relations keep record of the equalities and inequalities 
 of nodes  in the algorithm. 
 The relation $\eql$ is assumed to be
 reflexive, symmetric and transitive while $\neql$ is  assumed to be symmetric. 
 Canonical representatives are distinguished from non-canonical ones
 by setting  $\lab$ to  be defined or undefined.

\end{itemize}
\end{definition}

\begin{definition}[Equating two nodes]
\label{definition:equate}
We define a procedure 
$\equate{x}{y}{\lab}$ 
that equates two nodes $x$ and $y$ in $\lab$   as follows.
Let $x'$ and $y'$ be the canonical representatives of 
the $\eql$-equivalence classes of $x$ and $y$, i.e.
$x \eql x'$ and $y \eql y'$ where $\lab(x')$ and $\lab(y')$ are defined.
Assume that either $x'$ is not a variable or that
both $x'$ and $y'$ are variables 
(then, $x$ and $y$ will also be variables). 
When we equate a variable with an individual of
 the ontology, we choose the individual of the ontology as 
 representative of the equivalence class.
\begin{enumerate}

\item  set $\lab(x') \leftarrow \lab(y') $ 
\item  set  $\lab(x', z) \leftarrow \lab(y', z)$ and $\lab(z, x') \leftarrow \lab(z, y')$
\item  set $x' \eql y'$ 
\item set  $\lab(y') = \lab(y', z) = \lab(z,y')$ to be undefined 
\item for all $u$ with $u \neql y'$, $\makedifferent{u}{x'}{\lab}$
\item close $\eql$ under reflexivity, symmetry and transitivity.
\end{enumerate}
 
\end{definition}

\begin{definition}[Making two nodes different]
\label{definition:makedifferent}
We define a procedure called 
$\makedifferent{x}{y}{\lab}$ that makes two nodes $x$ and $y$ different in $\lab$   as follows.
For all $x'$ and $y'$ such that
$x \eql x'$ and $y \eql y'$, set $x' \neql y'$. 
Close $\neql$ under symmetry.
\end{definition}

\begin{definition}[Tableau Initialization]
\label{definition:tableauinit}
 The initial tableau for ${\cal O} = (\mathcal{T}, \mathcal{A})$
 is defined by the following procedure.

\begin{enumerate}
\item For each individual $a \in \mathcal{O}$, 
create a node labelled $a$ and  set $\lab(a) = \emptyset$.

\item For all pairs $a, b \in \mathcal{O}$ of individuals, set $\lab(a,b) = \emptyset$.

\item For each $C(a)$ in ${\cal A}$, set $\lab(a) \leftarrow C$.
  
\item For each $R(a,b)$ in ${\cal A}$, set $\lab(a,b) \leftarrow R$.

\item For each  $a \not = b$ in $\mathcal{A}$, set $a \neql b$.

\item For each $a = b$ in ${\cal A}$,
$\equate{a}{b}{\lab}$.
%
 
\end{enumerate}
\end{definition}

We say that  $y$ is  a {\em successor} of $x$
if  $\lab(x,y) $ is  neither $\emptyset$ nor undefined.
We define that $y$ is a {\em descendant} of $x$ by induction.
\begin{enumerate}
\item 
Every successor  of $x$,  which is a variable,
 is a descendant  of $x$. 
\item 
Every successor  of a descendant of $x$, which is  a variable, 
is also a descendant of $x$.
\end{enumerate}

\begin{definition}[Blocking]
We define the notion of blocking by induction.
A node $x$ is  blocked by a node $y$ if
$x$ is a descendant of $y$  and $\lab(x) \subseteq \lab(y)$
or $x$ is a descendant of $z$ and $z$ is blocked by $y$. 
\end{definition}

After initialization, the tableau algorithm proceeds by non-deterministically applying the {\bfseries{expansion rules}}  for $\mathcal{ALCQ}$ defined in Figure \ref{definition:rules}.

\begin{figure}
{\small
\begin{tabular}{ll}
{\bf $\sqcap$-rule}:& 
If $C \sqcap D \in \lab(x)$ and $\{C, D \} \not \subseteq \lab(x)$ then set $\lab(x) \leftarrow \{C, D \}$.\\
\\
{\bf $\sqcup$-rule}:& 
If $C \sqcup D \in \lab(x)$ and $\{C, D \} \cap \lab(x)= \emptyset$ then 
set $\lab(x) \leftarrow \{C \}$ or \\
                           & $\lab(x) \leftarrow \{D \}$.\\
\\
{\bf $\exists$-rule}:& 
If $x$ is not blocked, $\exists R.C\in \lab(x)$ and there is no $y$ with $R \in \lab(x,y)$ \\
                           & and $C \in \lab(y)$ then \\
& 1. Add a new node with label $y$ (where $y$ is a new node label), \\
& 2. set $\lab(x,y) = \{R\}$,\\
& 3. set $\lab(y) = \{ C \}$. \\
\\
{\bf $\forall$-rule}:& 
If $\forall R.C\in \lab(x)$ and there is a node $y$ with $R \in \lab(x,y)$ and $C \not \in \lab(y)$ \\
              & then set $\lab(x) \leftarrow C$. \\
\\
{\bf ${\cal T}$-rule}: & 
If $C \in {\cal T}$ and $C \not \in \lab(x)$, then $\lab(x) \leftarrow C$. \\

\\
{\bf $\geq$-rule}: & 
If $\geq nR.C \in \lab(x)$, $x$ is not blocked and there are no $y_1, \ldots, y_n$ such that\\
&  $R \in \lab(x, y_i)$, $C \in \lab(y_i)$, $y_i \neql y_j$ for $i,j \in \{1, \ldots n\}$, then  \\

& 1. create $n$ new nodes  $y_1, \ldots y_n$.\\
& 2. set $\lab(x,y_i) = \{R\}$, $\lab(y_i) = \{ C \}$ and $y_i \neql y_j$ for $i, j \in \{1, \ldots n \}$.\\

\\
{\bf choose-rule}: &
If $\leq n R. C \in \lab(x)$ and there is $y$ such that $R \in \lab(x,y)$,  
$C \not \in \lab(y)$, \\
&
$\fnn(\neg C) \not \in \lab(y)$, then set $\lab(y) \leftarrow C $ or $\lab(y) \leftarrow \fnn(\neg C) $. \\
\\
{\bf $\leq$-rule}: & 
If $\leq n R. C \in \lab(x)$, there are $y_1, \ldots, y_{n+1}$ with $R \in \lab(x, y_i)$, 
$C \in \lab(y_i)$  \\
& for $i \in \{1, \ldots n+1\}$ and there are $j,k \in \{1, \ldots n+1 \}$ such that  
$y_j \neql y_k$  \\ & does not hold, then 
 $ \equate{y_j}{y_k}{\lab}$
%
\end{tabular} 
}
\caption{Expansion Rules for $\mathcal{ALCQ}$}
\label{definition:rules}
\end{figure}

\begin{definition}[Contradiction]
$\mathcal{L}$  has a contradiction if either 

\begin{itemize}

\item $A$ and $\neg A$ belongs to $\mathcal{L}(x)$ for some atomic concept $A$ and node $x$ or 

\item  
 we have that $x \eql y$ and $x \neql y$ for some nodes $x$ and $y$.

\item there is a node $x$ such that $\leq nR.C \in \lab(x)$, $R \in \lab(x, y_i)$, $C \in \lab(y_i)$, $y_i  \neql y_j$ for all $i, j \in \{1, \ldots n +1\}$.

\end{itemize}

\end{definition}

\begin{definition}[${\cal ALCQ}$-Complete]
$\lab$ is ${\cal ALCQ}$-complete if 
none of the rules
of Figure \ref{definition:rules} is applicable.
\end{definition} 
The algorithm 
terminates when we reach some $\lab$ that is either complete
or has a contradiction. 
 The ontology $(\mathcal{T}, \mathcal{A})$ 
is consistent if there exists some
$\lab$ without contradictions.
Otherwise it is inconsistent.

\section{Omitted Proofs}
\label{appendix:proofs}

In this section, we show some  proofs that we could not
include in the main part of the paper due to  space constraints.\\\\
{\bf Proof of Lemma \ref{lemma:firstreasoningproblem}.}

\begin{proof}
First we prove the $\Rightarrow$ direction:\\
Suppose   towards a  
contradiction that there exists a model $\mathcal{I}$ of $\mathcal{O}$ such that $\mathcal{I} \models \mathcal{O} \cup \{a \not = b, b \eqm A \}$. Then, $a^{\mathcal{I}} \not = b^{\mathcal{I}}$ and $b^{\mathcal{I}} = A^{\mathcal{I}}$. But as $\mathcal{O} \models a \eqm A$, we have that $a^{\mathcal{I}} = A^{\mathcal{I}}$, $b^{\mathcal{I}} = A^{\mathcal{I}}$ and $a^{\mathcal{I}} \not = b^{\mathcal{I}}$, what results in a contradiction.
\\\\
$\Leftarrow$ direction:
\\
Suppose towards a 
 contradiction that $\mathcal{O} \not \models a \eqm A$. 
 Then for some model $\mathcal{I}$ of $\mathcal{O}$, $a^{\mathcal{I}} \not = A^{\mathcal{I}}$.
 We introduce a new individual $b$ such that $b^{\int}= A^{\int}$ and clearly,
 $b^{\int} \not = A^{\int}$. 
  This contradicts the hypothesis.
\end{proof}
{\bf Proof of Lemma \ref{lemma:secondreasoningproblem}.}

\begin{proof}
By Definition \ref{definition:metaconcept},  $C$ is a meta-concept iff 
$\mathcal{O} \models  C(a)$ and $\mathcal{O} \models a \eqm A$. 
 It is easy to see that $\mathcal{O} \models  C(a)$
is equivalent to 
the statement that $\mathcal{O} \cup \{\neg C(a) \}$  is  unsatisfiable.
\end{proof}

\begin{definition}[Abox associated to a Tableau graph]
\label{definition:associatedontology}
Given a graph $\lab$, the Abox 
 $\aboxL$  associated to $\lab$ is defined as follows.
\[
 \begin{array}{ll}
 \aboxL = & \{ C(x) \mid C \in \lab(y ), y \eql x\} \cup \\ 
          &      \{R(x,y) \mid R \in \lab(x',y'), x \eql x', y \eql y'  \} \cup  \\
          &      \{ x = y \mid x \eqL y  \} \cup
                \{ x \not = y \mid x \neql y \}
\end{array} 
 \]
\end{definition}

\begin{lemma}
\label{lemma:modelaboxinic}
Let $\int$ be a model of $(\mathcal{T}, \mathcal{A}, \mathcal{M})$ and $\lab$ the initial tableau for $(\mathcal{T}, \mathcal{A}, \mathcal{M})$. Then $\int$ is also a model of $(\mathcal{T}, \aboxL, \mathcal{M})$.
\end{lemma}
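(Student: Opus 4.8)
The plan is to note that the initialization procedure of Definition~\ref{definition:tableauinit} does not modify $\mathcal{T}$ or $\mathcal{M}$, so $\int$ satisfies those two components of $(\mathcal{T}, \aboxL, \mathcal{M})$ for free, and it remains to show $\int \models \aboxL$. Since $\aboxL$ (Definition~\ref{definition:associatedontology}) is read off entirely from the labels, the edges, and the relations $\eql$ and $\neql$ of the initial tableau graph $\lab$, the whole lemma reduces to four bookkeeping claims describing how these components of the initial $\lab$ are generated from the assertions of $\mathcal{A}$.

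First I would prove, by induction on the sequence of calls to $\equate{\cdot}{\cdot}{\lab}$ performed in step~6 of Definition~\ref{definition:tableauinit} (together with the nested calls to $\makedifferent{\cdot}{\cdot}{\lab}$ they trigger), the following invariants for the resulting graph:
(i) if $x \eql y$ then $x = y$ is derivable from $\{\, a = b \mid (a=b) \in \mathcal{A} \,\}$ by reflexivity, symmetry and transitivity, hence $x^{\int} = y^{\int}$;
(ii) if $x \neql y$ then there are individuals $a, b$ with $(a \neq b) \in \mathcal{A}$, $a \eql x$ and $b \eql y$, so that $x^{\int} = a^{\int} \neq b^{\int} = y^{\int}$ using (i) and $\int \models \mathcal{A}$;
(iii) if $C \in \lab(y)$ then there is an individual $z$ with $C(z) \in \mathcal{A}$ and $z \eql y$;
(iv) if $R \in \lab(x,y)$ then there are individuals $a,b$ with $R(a,b) \in \mathcal{A}$, $a \eql x$ and $b \eql y$.
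For (iii) and (iv) the point is that $\equate{\cdot}{\cdot}{\lab}$ only copies the label of a canonical representative onto another node and redirects its edges, so membership of a concept or role in a label always traces back to an original $\mathcal{A}$-assertion on some $\eql$-equivalent node; the fact that no variable nodes are created during initialization keeps these two claims trivial on the syntactic side.

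With the invariants in hand, I would then go through the four shapes of elements of $\aboxL$. For $C(x)$ with $C \in \lab(y)$ and $y \eql x$: by (iii) pick $z$ with $C(z) \in \mathcal{A}$ and $z \eql y$; then $z^{\int} \in C^{\int}$, and $x^{\int} = y^{\int} = z^{\int}$ by (i), so $x^{\int} \in C^{\int}$. For $R(x,y)$ with $R \in \lab(x',y')$, $x \eql x'$, $y \eql y'$: by (iv) and (i), $(x^{\int}, y^{\int}) = (a^{\int}, b^{\int}) \in R^{\int}$ for the witnessing assertion $R(a,b)$. For $x = y$ with $x \eql y$, and for $x \neq y$ with $x \neql y$: these are immediate from (i) and (ii). Hence $\int \models \aboxL$, and therefore $\int \models (\mathcal{T}, \aboxL, \mathcal{M})$.

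The main obstacle is the induction behind claims (i)--(iv): one has to unfold Definitions~\ref{definition:equate} and~\ref{definition:makedifferent} and check that each of the six steps of $\equate{\cdot}{\cdot}{\lab}$ preserves the invariant, in particular the reassignment of labels and edges to the new canonical representative and the recursive re-propagation of inequalities in step~5. Once that is verified, the rest of the argument is a routine case analysis.
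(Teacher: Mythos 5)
Your proposal is correct and follows essentially the same route as the paper: both characterize the labels, edges, and $\eql$/$\neql$ relations of the initial tableau as the closure of the assertions in $\mathcal{A}$, and then observe that $\int \models \mathcal{A}$ immediately gives $\int \models \aboxL$, while $\mathcal{T}$ and $\mathcal{M}$ are untouched by initialization. The only difference is that you make explicit the induction over the ${\sf equate}$/${\sf differenciate}$ calls that the paper leaves as a direct appeal to the definitions, which is a reasonable amount of extra care but not a different argument.
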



\begin{proof}
By Definitions \ref{definition:tableauinit} and 
 \ref{definition:associatedontology} 
 we have that:

\[\begin{array}{ll}
\aboxL = & 
\{ C(a)  \mid b \eqL a, C(b) \in \mathcal{A}\} \cup   \\
& \{R(a, b) \mid c \eqL a, d \eqL b, R(c, d) \in \mathcal{A}\} \cup  \\ 
&  \{a = b \mid a \eqL b\}\cup \{ a \not = b \mid  a \eqL c, b \eqL d, c \not = b \in 
\mathcal{A} \}  
\end{array}
\]  
It is also easy to see that
$\eqL$ is the reflexive, symmetric and transitive closure
of $\{(a,b) \mid a = b \in \mathcal{A}\}$.
Clearly, $\int$ is a model of $\mathcal{A}$ iff $\int$ is a model of $\aboxL$.
\end{proof}

The following lemma is easy to prove.

\begin{lemma}
\label{lemma:expansionrulesforward}
Let $\int$ be a model of $(\mathcal{T}, \aboxL, \mathcal{M})$.
\begin{enumerate}
\item 
If $(\mathcal{T'}, \lab')$  is 
 obtained from $(\mathcal{T}, \lab)$ by applying an
$\mathcal{ALCQM}$-expansion rule which is not 
close, $\leq$, $\sqcup$  or choose
then  $\int$ is a model
of $(\mathcal{T'}, \aboxLi{\lab'}, \mathcal{M})$.
\item 
If the rule applied is either 
close, $\leq$, $\sqcup$  or choose then
there exists a choice that yields a $(\mathcal{T'}, \lab')$
where $\int$ is a model
of $(\mathcal{T'}, \aboxLi{\lab'}, \mathcal{M})$.
\end{enumerate}
\end{lemma}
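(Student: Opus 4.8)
The plan is to prove the lemma by a case analysis on the $\mathcal{ALCQM}$-expansion rule that is applied to $(\mathcal{T},\lab)$, handling the deterministic rules (every rule except \textbf{close}, $\leq$, $\sqcup$ and \textbf{choose}) for part~1 and the four non-deterministic rules for part~2. Throughout I would rely on the description of $\aboxL$ from Definition~\ref{definition:associatedontology}: a model $\int$ of $(\mathcal{T},\aboxL,\mathcal{M})$ is an $\mathcal{ALCQ}$-interpretation that assigns an element of $\Delta$ to every node of $\lab$, makes $x^\int=y^\int$ whenever $x\eql y$ and $x^\int\neq y^\int$ whenever $x\neql y$, satisfies every labelling constraint recorded in $\lab$, and satisfies $a^\int=A^\int$ for every $a\eqm A\in\mathcal{M}$. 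In each case the goal is to check that the (possibly extended) interpretation $\int$ still satisfies the new constraints collected into $\aboxLi{\lab'}$ together with the enlarged Tbox $\mathcal{T'}$.

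For the deterministic rules the verification is read off directly from the semantics. The $\sqcap$-, $\forall$- and $\mathcal{T}$-rules only add a concept $C$ to some $\lab(x)$, and in each case $x^\int\in C^\int$ follows from $\int\models(\mathcal{T},\aboxL,\mathcal{M})$ (using respectively $x^\int\in(C\sqcap D)^\int$; $(x^\int,y^\int)\in R^\int$ together with $x^\int\in(\forall R.C)^\int$; and $C^\int=\Delta$ for $C\in\mathcal{T}$). The $\exists$- and $\geq$-rules create fresh variable nodes; here I would extend $\int$ by interpreting those variables as existential witnesses taken from $\int$ itself (one $R$-successor of $x^\int$ in $C^\int$ for $\exists R.C$, and $n$ pairwise distinct such successors for $\geq n R.C$), which exist precisely because $\int$ is a model. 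The only genuinely new point is the pair of Mbox rules. For the $\eql$-rule: when $a\eql b$ in $\lab$ we have $a=b\in\aboxL$, so $a^\int=b^\int$; since $\int\models\mathcal{M}$ gives $A^\int=a^\int$ and $B^\int=b^\int$, we get $A^\int=B^\int$, hence $(A\sqcup\neg B)^\int=(B\sqcup\neg A)^\int=\Delta$, so $\int\models\mathcal{T'}$. For the $\neql$-rule: $a\neql b$ gives $a^\int\neq b^\int$, hence $A^\int\neq B^\int$, so the symmetric difference $(A\sqcap\neg B\sqcup B\sqcap\neg A)^\int$ is non-empty, and interpreting the fresh node $z$ as any member of it discharges the new constraint.

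For the non-deterministic rules I would exhibit the correct choice. For the $\sqcup$-rule, $x^\int\in(C\sqcup D)^\int$ lets us select the disjunct that holds at $x^\int$; for the \textbf{choose}-rule, classical logic gives either $y^\int\in C^\int$ or $y^\int\in(\fnn(\neg C))^\int$, and we branch accordingly. For the $\leq$-rule, from $x^\int\in(\leq n R.C)^\int$ and the fact that $y_1^\int,\dots,y_{n+1}^\int$ are all $R$-successors of $x^\int$ lying in $C^\int$, the pigeonhole principle gives $p\neq q$ with $y_p^\int=y_q^\int$; moreover $y_p\neql y_q$ cannot hold, since that would force $y_p^\int\neq y_q^\int$, so $(p,q)$ is a legal choice for the rule and $\equate{y_p}{y_q}{\lab}$ leaves $\int$ a model because the two nodes already receive the same value. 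For the \textbf{close}-rule, with $a\eql x$ and $b\eql y$ we have $a^\int=x^\int$ and $b^\int=y^\int$; by excluded middle either $x^\int=y^\int$, in which case $\equate{a}{b}{\lab}$ is sound, or $x^\int\neq y^\int$, in which case $\makedifferent{a}{b}{\lab}$ is sound.

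The one step that deserves a little care — and that I would isolate as an auxiliary observation — is that the procedures $\equate{\cdot}{\cdot}{\lab}$ and $\makedifferent{\cdot}{\cdot}{\lab}$ of Definitions~\ref{definition:equate} and~\ref{definition:makedifferent} preserve modelhood when the two nodes involved already receive the same value under $\int$ (respectively different values): one checks each numbered step of Definition~\ref{definition:equate} — copying the label and the edges of $y'$ onto $x'$, declaring $x'\eql y'$, undefining $\lab(y')$, and the recursive $\makedifferent{u}{x'}{\lab}$ calls — against this assumption, and observes that closing $\eql$ and $\neql$ under their closure conditions only adds statements already valid in $\int$. Apart from this bookkeeping, and the routine remark that every member of $\aboxLi{\lab'}$ is either inherited from $\aboxL$ or one of the constraints verified above, the argument is exactly the mechanical case check just indicated; this is why the lemma can be asserted as easy.
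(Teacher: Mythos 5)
Your proof is correct and is essentially the argument the paper intends: the paper itself gives no proof of this lemma (it only remarks that it ``is easy to prove''), and the standard rule-by-rule local-soundness check you carry out -- deterministic rules verified directly, non-deterministic rules handled by exhibiting the choice consistent with $\int$, plus the auxiliary observation that $\equate{x}{y}{\lab}$ and $\makedifferent{x}{y}{\lab}$ preserve modelhood when $x^{\int}=y^{\int}$ (respectively $x^{\int}\neq y^{\int}$) -- is exactly what that remark presupposes. Your explicit treatment of the fresh nodes introduced by the $\exists$-, $\geq$- and $\neql$-rules (extending $\int$ with witnesses) is a detail the paper glosses over but which is needed for $\int$ to satisfy $\aboxLi{\lab'}$, so your write-up is, if anything, more careful than the source.
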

{\bf Proof of Soundness}.
 
\begin{proof}
By Lemma \ref{lemma:modelaboxinic}, we have that
$\int$ is a model of 
$(\mathcal{T}_0, \aboxLi{\lab_0}, \mathcal{M})$ 
where $\mathcal{T}_0 = \mathcal{T}$ 
and $\lab_0$ is the initial graph build by the tableau algorithm.
By Theorem \ref{theorem:termination}, the tableau algorithm 
always terminates.
It follows from Lemma \ref{lemma:expansionrulesforward}
and the fact that  $\int $ is a model  
of  $(\mathcal{T}, \mathcal{A}, \mathcal{M})$ 
that there is a sequence  $(\mathcal{T}_0, \lab_0), (\mathcal{T}_1, \lab_1),
 \ldots (\mathcal{T}_k,  \lab_k)$
 such that
 $\int$ is also a model of $(\mathcal{T}_k, \aboxLi{\lab_k}, \mathcal{M})$,
 $(\mathcal{T}_{i+1}, \lab_{i+1})$ 
is obtained from $(\mathcal{T}_i, \lab_i)$ by applying an 
$\mathcal{ALCQM}$-expansion rule,
 $(\mathcal{T}_k, \lab_k)$ is $\mathcal{ALCQM}$-complete
and  $\lab_k = \lab$ has no contradictions. \\
Suppose now towards a contradiction that $\lab$ has a cycle. 
By Definition \ref{definition:cycle},
there exist a set of meta-modelling axioms 
 $ A_0  \eqm a_0 $, $ A_1  \eqm a_1 $, $ \ldots$ $ A_n \eqm  a_n$
 all in   $\mathcal{M}$ 
such that 
\[\begin{array}{ll}
A_1 \in \lab(x_0) & x_0 \eql a_0\\
A_2 \in \lab(x_1) & x_1 \eql a_1\\
\vdots & \vdots \\
A_n \in \lab(x_{n-1}) \ \ \ & x_{n-1} \eql a_{n-1}\\
A_0 \in \lab(x_{n}) & x_{n} \eql a_{n} 
\end{array}
\]
It follows from Definition  \ref{definition:associatedontology}
 and  the fact that $\int$ is a model of  
 $(\mathcal{T}_k, \aboxLi{\lab_k})$ 
 $(\mathcal{T}_k, \lab_k)$
 ~\footnote{If $x \eqL a$ and $A \in \lab(x)$, 
 by Definition of $\aboxL$, $A (x) \in \aboxL$.
 Since $\int$ is a model of $\aboxL$, we have that
$(x)^{\int} = (a)^{\int} \in (A)^{\int}$
because $\int$ is a model of $\aboxL$. } 
that
\begin{equation}
\label{equation:completeness}
(a_n)^{\int} \in (A_0)^{\int}  \ \ (a_0)^{\int}  \in (A_1)^{\int}  \ \ (a_1)^{\int}  \in (A_2)^{\int}  
   \ldots
   (a_{n-1})^{\int} \in (A_{n})^{\int} 
\end{equation}
Since $\int$ is also a model of $\mathcal{M}$, we have that
\[
(a_n)^{\int} \in (A_0)^{\int} = (a_0)^{\int}  \in 
 (A_1)^{\int} = (a_1)^{\int}  \in (A_2)^{\int} 
   \ldots
  (a_{n-1})^{\int} \in (A_{n})^{\int} =  (a_n)^{\int}   
\]
and hence, the domain of $\int$ is not  well-founded
 contradicting the first clause in Definition \ref{definition:modelALCQM}. 
\end{proof}
{\bf Proof of Lemma \ref{lemma:modelstar}.}

\begin{proof}
Suppose $\lab$ is $\mathcal{ALCQM}$-complete and has no contradictions.
By the close-rule, we have that
either $a \eql  a'$ or $ a \not \eql a' $ (but not both).\\
Suppose 
 $a \eql a'$. By the $\eql$ and $\mathcal{T}$-rules,
  we have that
  $\{ A \sqcup \neg A'$, $A' \sqcup \neg A \}
\subseteq \lab(y)$ for all nodes $y$ such that $\lab(y)$ is defined.
It is  easy to prove that
$A \in \lab(y)$ iff $A' \in \lab(y)$ for all nodes $y$ such that $\lab(y)$
is defined.
Hence, $(A)^{\intcan} =(A')^{\intcan}$.\\ 
Suppose now that $a \neql a'$.  
By $\neql$-rule, we have that
$(A \sqcap \neg B \sqcup B \sqcap \neg A)(z)$
for some node $z$. Either $A, \neg B \in \lab(z)$ or
$B, \neg A \in \lab(z)$. In any case, $A^{\intcan} \not = B^{\intcan}$.
\end{proof}
\end{document}